\theoremstyle:=definition,remark,plain\do{%
        \expandafter\g@addto@macro\csname th@\theoremstyle\endcsname{%
            \addtolength\thm@preskip\parskip
            }%
        }
\newtheorem{definition}{Definition}
\newenvironment{short-definition}[1][]{}{}
\newtheorem{axiom}[definition]{Definition}
\newtheorem{theorem}{Theorem}
\newtheorem{lemma}[theorem]{Lemma}
\newcommand{\weight}[3][]{E#1(#2,#3)}
\newcommand{\vol}[1]{v_{#1}}
\newcommand{\within}[1]{w_{#1}}
\newcommand{\volin}[2]{v_{#1}(#2)}
\newcommand{\withinin}[2]{w_{#1}(#2)}
\newcommand{\Obj}{Q}
\newcommand{\objpart}{q}
\newcommand{\RR}{\mathbb{R}}
\newcommand{\RRnonneg}{\RR_{\ge 0}}
\DeclareMathOperator{\argmax}{argmax}
\newcommand{\qmod}{\Obj_\text{modularity}}
\newcommand{\qcpm}{\Obj_\text{cpm}}
\newcommand{\qcoco}{\Obj_\text{coco}}
\newcommand{\qfixed}[1][M]{\Obj_\text{$#1$-fixed}}
\newcommand{\qext}[2]{\Obj_\text{$#1$,$#2$}}
\newcommand{\vamount}{\gamma}
\newcommand{\maxfrac}[1]{f_{#1}}
\newcommand{\indicator}{\mathbf{1}}
\newcommand{\cf}{\hat{C}}
\tikzset{
  node/.style={circle,draw,inner sep=0,minimum size=5mm},
  lbl/.style={fill=white,inner sep=0.2mm,minimum size=1mm},
}
\title{Axioms for graph clustering quality functions}
\author{%
 \name Twan van Laarhoven %
 \email tvanlaarhoven@cs.ru.nl
   \AND
 \name Elena Marchiori %
 \email elenam@cs.ru.nl \\
 \addr Institute for Computing and Information Sciences\\ Radboud University Nijmegen\\Postbus 9010\\6500 GL Nijmegen, The Netherlands %
 }
\begin{document}

\maketitle

\begin{abstract}
  We investigate properties that intuitively ought to be satisfied by graph clustering quality functions, i.e. functions that assign a score to a clustering of a graph.
  Graph clustering, also known as network community detection, is often performed by optimizing such a function.
%
  Two axioms tailored for graph clustering quality functions are introduced, and the four axioms introduced in previous work on distance based clustering are reformulated and generalized for the graph setting.
  We show that modularity, a standard quality function for graph clustering, does not satisfy all of these six properties.
  %
  This motivates the derivation of a new family of quality functions, adaptive scale modularity, which does satisfy the proposed axioms.
  Adaptive scale modularity has two parameters, which give greater flexibility in the kinds of clusterings that can be found. 
  Standard  graph clustering quality functions, such as normalized cut and unnormalized cut, are obtained as special cases of adaptive scale modularity. 
  
  %
  In general, the results of our investigation indicate that the considered axiomatic framework covers existing `good' quality functions for graph clustering, and can be used to derive an interesting new family of quality functions.


\end{abstract}

\begin{keywords}
  graph clustering, modularity, axiomatic framework.
\end{keywords}

\section{Introduction}

Following the work by \citet{Kleinberg2002impossibility} there have been various contributions to the theoretical foundation and analysis of clustering, such as axiomatic frameworks for quality functions \citep{AckermanBen-David2008axioms}, for criteria to compare clusterings \citep{Meila2005},  uniqueness theorems for specific types of clustering  \citep*{Zadeh2009uniqueness, Ackerman2013, Carlsson2013},  taxonomy of clustering  paradigms \citep{AckermanBL10},  and characterization of diversification systems \citep{gollapudi2009axiomatic}.

\Citeauthor{Kleinberg2002impossibility}
focused on clustering functions, which are functions from a distance function to a clustering. He showed that there are no clustering functions that simultaneously satisfy three intuitive properties: scale invariance, consistency and richness. \Citet{AckermanBen-David2008axioms} continued on this work, and showed that the impossibility result does not apply when formulating these properties in terms of quality functions instead of clustering functions, where consistency is replaced with a weaker property called monotonicity.

Both of these previous works are formulated in terms of distance functions over a fixed domain.
In this paper we focus on weighted graphs, where the weight of an edge indicates the strength of a connection.
The clustering problem on graphs is also known as network community detection. 

Graphs provide additional freedoms over distance functions.
In particular, it is possible for two points to be unrelated, indicated by a weight of $0$.
These zero-weight edges in turn make it natural to consider graphs over different sets of nodes as part of a larger graph.
Secondly, we can allow for self loops. Self loops can indicate internal edges in a node. This notation is used for instance by \citet{Blondel2008}, where a graph is contracted based on a fine-grained clustering.

In this setting, where edges with weight $0$ are possible, \citeauthor{Kleinberg2002impossibility}'s impossibility result does not apply. This can be seen by considering the connected components of a graph. This is a graph clustering function that satisfies all three of Kleinberg's axioms: scale invariance, consistency and richness (see Section~\ref{sec:coco}).

Our focus is on the investigation of graph clustering quality functions, which are functions from a graph and a clustering to a real number `quality'. A notable example is modularity \citep{NewmanGirvan2004}.
In particular we ask which properties of quality functions intuitively ought to hold, and which are often assumed to hold when reasoning informally about graph clustering. Such properties might be called axioms for graph clustering.

The rest of this paper is organized as follows:
Section~\ref{sec:definitions-notation} gives basic definitions.
Next, section~\ref{sec:form-of-axioms} discusses different ways in which properties could be formulated.

In Section~\ref{sec:axioms} of this paper we propose an axiomatic framework that consists of six properties of graph clustering quality functions: the (adaption of) the four axioms from \citet{Kleinberg2002impossibility} and \citet{AckermanBen-David2008axioms} (permutation invariance, scale invariance, richness and monotonicity); and two additional properties specific for the graph setting (continuity and the locality).


Then, in Section~\ref{sec:modularity},
we show that modularity does not satisfy the monotonicity and locality properties.

This result motivates the analysis of variants of modularity, leading to the derivation of a new parametric quality function in Section~\ref{sec:qext}, that satisfies all properties. This quality function, which we call adaptive scale modularity, has two parameters, $M$ and $\gamma$ which can be tuned to control the resolution of the clustering.
We show that quality functions similar to normalized cut and unnormalized cut are obtained in the limit when $M$ goes to zero and to infinity, respectively. Furthermore, setting $\gamma$ to  $0$ yields a parametric quality function similar to that proposed by \citet{Reichardt2004}.

\subsection{Related Work}\label{sec:relwork}

Previous axiomatic studies of clustering quality functions have focused mainly on hierarchical clustering and on weakest and strongest link style quality functions \citep{Kleinberg2002impossibility,AckermanBen-David2008axioms,Zadeh2009uniqueness,Carlsson2013}.  Papers in this line of work that focussed also on the partitional setting include \cite{Puzicha99theory, AckermanBBL12, AckermanBeLoSa13}. \Citet{Puzicha99theory}  investigated a particular class of clustering quality functions obtained by requiring the function to decompose into a certain additive form. \Citet{AckermanBBL12}  considered clustering in the weighted setting, in which every data point is assigned a real valued weight.  They performed a theoretical analysis on the influence of weighted data on standard clustering algorithms. \Citet{AckermanBeLoSa13}  analyzed robustness of clustering algorithms to the addition of a small set of points, and investigated  the robustness of popular clustering methods.

All these studies are framed in terms of distance (or similarity and dissimilarity) functions.

\Citet{Bubeck2009} studied statistical consistency of clustering methods. They introduced the so-called nearest neighbor clustering and showed its consistency also for standard  graph based quality functions, such as normalized cut, ratio cut, and modularity. Here we do not focus on properties of methods to optimize clustering quality, but on natural properties that quality functions for graph clustering should satisfy.

Related works on graph clustering quality functions mainly focus on the so-called resolution limit, that is, the tendency of a quality function to prefer either small or large clusters.
In particular, \citet{Fortunato2007ResolutionLimit} proved that modularity may not detect clusters smaller than a scale which depends on the total size of the network and on the degree of interconnectedness of the clusters.  \Citet{Laarhoven2013} showed that the resolution limit is the most important difference between quality functions in graph clustering optimized using local search optimization.

To mitigate the resolution limit phenomenon, the quality function may be extended with a so-called resolution parameter.  For example, \Citet{Reichardt2006statistical} proposed a formulation of graph clustering (therein called network community detection)  based on principles from statistical mechanics. This interpretation leads to the introduction of a family of quality functions with a parameter that allows to control the clustering resolution. 
In Section~\ref{sec:qext-related} we will show that this extension is a special case of adaptive scale modularity.


\Citet*{Traag2011ResolutionLimitScope} formalized the notion of resolution-free quality functions, that is, not suffering from the resolution limit, and  provided a characterization of this class of quality functions. Their notion is essentially an axiom, and we will discuss the relation to our axioms in Section~\ref{sec:rlf}.

\section{Definitions and Notation}\label{sec:definitions-notation}

\begin{short-definition}[Graph]
  A \emph{symmetric weighted graph} is a pair $(V,E)$ of a finite set $V$ of nodes and a function $E : V \times V \to \RRnonneg$ of edge weights, where $E(i,j) = E(j,i)$ for all $i,j \in V$.%
\end{short-definition}
Edges with larger weights represent stronger connections, so missing edges can get weight $0$.
Note that this is the opposite of the convention used in distance based clustering.
We explicitly allow for self loops, that is, nodes for which $\weight{i}{i} > 0$.
%

\begin{short-definition}[Clustering]
  A \emph{clustering} $C$ of a graph $G=(V,E)$ is a partition of its nodes.
  That is, $\bigcup C = V$ and for all $c_1,c_2 \in C$, $c_1 \cap c_2 \neq \emptyset$ if and only if $c_1 = c_2$.
\end{short-definition}
When two nodes $i$ and $j$ are in the same cluster in clustering $C$, i.e. when $i,j \in c$ for some $c \in C$, then we write $i \sim_C j$. Otherwise we write $i \not\sim_C j$.

\begin{short-definition}[Refinement]
  A clustering $C$ is a \emph{refinement} of a clustering $D$, written $C \sqsubseteq D$, when
  for every cluster $c \in C$ there is a cluster $d \in D$ such that $c \subseteq d$.
\end{short-definition}


\begin{short-definition}[Quality function]
  A \emph{graph clustering quality function} (or objective function) $\Obj$ is a function from graphs $G$ and clusterings of $G$ to
  real numbers.
\end{short-definition}
We adopt the convention that a higher quality indicates a `better' clustering.
As a generalization, we will sometimes work with parameterized \emph{families of quality functions}. A single quality function can be seen as a family with no parameters.

\begin{short-definition}[Agreement]
  Let $G_1 = (V_1, E_1)$ and $G_2 = (V_2, E_2)$ be two graphs and let $V_a \subseteq V_1 \cap V_2$ be a subset of the common nodes.
  We say that the graphs \emph{agree on} $V_a$ if $E_1(i,j) = E_2(i,j)$ for all $i,j \in V_a$.
  %
  %
  We say that the graphs also \emph{agree on the neighborhood of $V_a$}
  If \begin{itemize}
    \item $E_1(i,j) = E_2(i,j)$ for all $i \in V_a$ and $j \in V_1 \cap V_2$,
    \item $E_1(i,j) = 0$ for all $i \in V_a$ and $j \in V_1 \setminus V_2$, and
    \item $E_2(i,j) = 0$ for all $i \in V_a$ and $j \in V_2 \setminus V_1$.\end{itemize}
%
\end{short-definition}
This means that for nodes in $V_a$ the weights and endpoints of incident edges are exactly the same in the two graphs.



%


%
\section{On the Form of Axioms}\label{sec:form-of-axioms}

There are three different ways to state potential axioms for clustering:
%
\begin{enumerate}
\item As a property of clustering functions, as in \cite{Kleinberg2002impossibility}.
      For example, scale invariance of a clustering function $\cf$ would be written as
      ``$\cf(G) = \cf(\alpha G)$, for all graphs $G$, $\alpha>0$''.
      I.e. the optimal clustering is invariant under scaling of edge weights.
\item As a property of the values of a quality function $\Obj$, as in \Citet{AckermanBen-David2008axioms}.
      For example ``$\Obj(G,C) = \Obj(\alpha G,C)$, for all graphs $G$, all clustering $C$ of $G$, and $\alpha>0$''.
      I.e. the quality is invariant under scaling of edge weights.
\item As a property of the relation between qualities of different clustering, or equivalently, as a property of an ordering of clusterings for a particular graph.
      For example
      ``$\Obj(G,C) \ge \Obj(G,D) \Rightarrow \Obj(\alpha G,C) \ge \Obj(\alpha G,D)$''.
      I.e. the `better than' relation for clusterings is invariant under scaling of edge weights.
\end{enumerate}

The third form is slightly more flexible than the other two. Any quality function that satisfies a property in the second style will also satisfy the corresponding property in the third style, but the converse is not true.
Note also that if $D$ is not restricted in a property in the third style, then one can take $\cf(G) = \argmax_C \Obj(G,C)$ to obtain a clustering function and an axiom in the first style.

Most properties are more easily stated and proved in the second, absolute, style.
Therefore, we adopt the second style unless doing so requires us to make specific choices.


\section{Axioms for Graph Clustering Quality Functions}\label{sec:axioms}

\Citeauthor{Kleinberg2002impossibility} defined three axioms for distance based clustering functions.
In \citet{AckermanBen-David2008axioms} the authors reformulated these into four axioms for clustering quality functions.
These axioms can easily be adapted to the graph setting.


The first property that one expects for graph clustering is that the quality of a clustering depends only on the graph, that is, only on the weight of edges between nodes, not on the identity of nodes. We formalize this in the permutation invariance axiom,

\begin{axiom}[Permutation invariance]
  \label{ax:permutation-invariance}
  A graph clustering quality function $\Obj$ is \emph{permutation invariant} if 
  for all graphs $G=(V,E)$ and all isomorphisms $f : V \to V'$,
  it is the case that $\Obj(G,C) = \Obj(f(G),f(C))$;
  where $f$ is extended to graphs and clusterings by
  $f(C) = \{\{f(i)\mid i \in c\} \mid c\in C\}$ and
  $f((V,E)) = (V',(i,j)\mapsto E(f^{-1}(i),f^{-1}(j)))$.
\end{axiom}


The second property, scale invariance, requires that the quality doesn't change when edge weights are scaled uniformly.
This is an intuitive axiom when one thinks in terms of units: a graph with edges in ``m/s'' can be scaled to a graph with edges in ``km/h''. The quality should not be affected by such a transformation, perhaps up to a change in units.

\citet{AckermanBen-David2008axioms} defined scale invariance by insisting that the quality stays equal when distances are scaled.
In contrast, in \citet{Puzicha99theory} the quality should scale proportional with the scaling of distances.
We generalize both of these previous definitions by only considering the relations between the quality of two clusterings.

\begin{axiom}[Scale invariance]
  \label{ax:scale-invariance}
  A graph clustering quality function $\Obj$ is \emph{scale invariant} if
  for all graphs $G=(V,E)$,
  all clusterings $C_1,C_2$ of $G$
  and all constants $\alpha > 0$,
  $\Obj(G,C_1) \leq \Obj(G,C_2)$ if and only if $\Obj(\alpha G,C_1) \leq \Obj(\alpha G,C_2)$.
  Where $\alpha G=(V,(i,j) \mapsto \alpha E(i,j))$ is a graph with edge weights scaled by a factor $\alpha$.
\end{axiom}

This formulation is flexible enough for single quality functions. However, families of quality functions could have parameters that are also scale dependent. For such families we therefore propose to use as an axiom a more flexible property that also allows the parameters to be scaled,


\begin{axiom}[Scale invariant family]
  \label{ax:scale-invariant-family}
  \def\param{P}
  A family of quality function $\Obj_\param$ parameterized by $\param \in \mathcal{\param}$ is \emph{scale invariant} if
  for all constants $\param \in \mathcal{\param}$ and $\alpha > 0$
  there is a $\param' \in \mathcal{\param}$
  such that
  for all graphs $G=(V,E)$,
  and all clusterings $C_1,C_2$ of $G$,
  $\Obj_\param(G,C_1) \leq \Obj_\param(G,C_2)$ if and only if $\Obj_{\param'}(\alpha G,C_1) \leq \Obj_{\param'}(\alpha G,C_2)$.
\end{axiom}


Thirdly, we want to rule out trivial quality functions. This is done by requiring richness, i.e. that by changing the edge weights any clustering can be made optimal for that quality function.

\begin{axiom}[Richness]
  \label{ax:richness}
  A graph clustering quality function $\Obj$ is \emph{rich} if
  for all sets $V$ and all non-trivial partitions $C^*$ of $V$, there is a graph $G=(V,E)$ such that $C^*$ is the $Q$-optimal clustering of $V$, i.e. $\argmax_{C} \Obj(G,C) = C^*$.
\end{axiom}


The last axiom that \citeauthor{AckermanBen-David2008axioms} consider is by far the most interesting.
Intuitively, we expect that when the edges within a cluster are strengthened, or when edges between clusters are weakened, that this does not decrease the quality. Formally we call such a change of a graph a consistent improvement,

\begin{definition}[Consistent improvement]
  Let $G=(V,E)$ be a graph and $C$ a clustering of $G$.
  A graph $G'=(V,E')$ is a \emph{$C$-consistent improvement of $G$} if
  for all nodes  $i$ and $j$,
  $\weight[']{i}{j} \ge \weight{i}{j}$ whenever $i \sim_C j$ and
  $\weight[']{i}{j} \le \weight{i}{j}$ whenever $i \not\sim_C j$.
\end{definition}

We say that a quality function that does not decrease under consistent improvement is monotonic. In previous work this axiom is often called consistency.

\begin{axiom}[Monotonicity]
  \label{ax:monotonicity}
  A graph clustering quality function $\Obj$ is \emph{monotonic} if
  for all graphs $G$, all clusterings $C$ of $G$ and all $C$-consistent improvements $G'$ of $G$
  it is the case that
  $\Obj(G',C) \ge \Obj(G,C)$.
\end{axiom}

\subsection{Locality}\label{sec:locality}

In the graph setting it also becomes natural to look at combining different graphs. With distance functions this is impossible, since it is not clear what the distance between nodes from the two different sets should be. But for graphs we can take the edge weight between nodes not in both graphs to be zero, which is the case when the graphs agree on the neighborhood of some set.

Consider adding nodes to one side of a large network, then we would not want the clustering on the other side of the network to change if there is no direct connection.
For example, if a new protein is discovered in yeast, then the clustering of unrelated proteins in humans should remain the same.
Similarly, we can consider any two graphs with disjoint node sets as one larger graph. Then the quality of clusterings of the two original graphs should relate directly to quality on the combined graph.

In general, local changes to a graph should have only local consequences to a clustering. Or in other words, the contribution of a single cluster to the total quality should only depend on nodes in the neighborhood of that cluster.

\begin{axiom}[Locality]
  \label{ax:locality}
  A graph clustering quality function $\Obj$ is \emph{local} if
  for all graphs $G_1 = (V_1, E_1)$ and $G_2 = (V_2, E_2)$ that agree on a set $V_a$ and its neighborhood,
  and for all clusterings $C_a,D_a$ of $V_a$, $C_1$ of $V_1 \setminus V_a$ and $C_2$ of $V_2 \setminus V_a$,
  if   $\Obj(G_1, C_a \cup C_1) \ge \Obj(G_1, D_a \cup C_1)$
  then $\Obj(G_2, C_a \cup C_2) \ge \Obj(G_2, D_a \cup C_2)$.
\end{axiom}

Any quality function that has a preference for a fixed number of clusters will not be local.
On the other hand, a quality function that is written as a sum over clusters, where each summand depends only on properties of nodes and edges in one cluster and not on global properties, is local.


\Citet{AckermanBenDavidLokerCOLT2010} defined a similar locality property for clustering functions.
Their definition differs from ours in three ways.
First of all, they looked at $k$-clustering, where the number of clusters is given and fixed.
Secondly, their locality property only implies a consistent clustering when the rest of the graph is removed, corresponding to $V_2 = V_1\cap V_a$. They do not consider the other direction, where more nodes and edges are added.
Finally, their locality property requires only agreement of the overlapping set $V_a$, not on its neighborhood. That means that clustering functions should also give the same results if edges with one endpoint in $V_a$ are removed.

\subsubsection{Relation to Resolution-Limit-Free Quality Functions}\label{sec:rlf}

\Citet{Traag2011ResolutionLimitScope} introduced the notion of \emph{resolution-limit-free} quality functions, which is similar to locality. They then showed that resolution-limit-free quality functions do not suffer from the resolution limit as described by \citet{Fortunato2007ResolutionLimit}. Their definition is as follows.

\begin{definition}[Resolution-limit-free]
 \label{ax:rlf}
 Call a clustering $C$ of a graph $G$ $\Obj$-optimal if for all clustering $C'$ of $G$ we have that  $\Obj(G,C) \geq \Obj(G,C')$. Let $C$  be a $\Obj$-optimal clustering of a graph $G_1$. Then the quality function $\Obj$ is
called resolution-limit-free if for each subgraph $G_2$ induced by $D \subset C$, the partition $D$ is also $\Obj$-optimal.
\end{definition}

There are three differences compared to our locality property. First of all, Definition~\ref{ax:rlf} refers only to the optimal clustering, not to the quality, i.e. it is a property in the style of Kleinberg.
Secondly, locality does not require that $G_2$ be a subgraph of $G_1$. Locality is stronger in that sense.
Thirdly, and perhaps most importantly, in the subgraph $G_2$ induced by $D \subset C$, edges from a node in $D$ to nodes not in $D$ will be removed. That means that while $G_1$ and $G_2$ agree on the set of common nodes, they do not also agree on their neighborhood. So in this sense locality is weaker than resolution-limit-freedom.

The notion of resolution-limit-free quality functions was born out of the need to avoid the resolution limit of graph clustering. And indeed locality is not enough to guarantee that a quality function is free from this resolution limit.

We could look at a stronger version of locality, which replaces agreement on the neighborhood of a set $V_a$ by plain agreement on that set. Such a \emph{strong locality} property would imply resolution-limit-freedom. However, it is a very strong property in that it rules out many sensible quality functions. In particular, a strongly local quality function can not depend on the weight of edges entering or leaving a cluster, because that weight can be different in another graph that agrees only on that cluster.

The solution used by \citeauthor{Traag2011ResolutionLimitScope} is to use the number of nodes instead of the volume of a cluster. In this way they obtain a resolution-limit-free variant of the Potts model by \citet{Reichardt2004}, which they call the constant Potts model. But this comes at the cost of scale invariance.

\subsection{Continuity}\label{sec:coco}

In the context of graphs, perhaps the most intuitive clustering function is finding the connected components of a graph.
As a quality function, we could write
\begin{equation*}
  \qcoco(G,C) = \indicator[C = \cf_\text{coco}(G)],
\end{equation*}
where the function $\cf_\text{coco}$ yields the connected components of a graph.

This quality function is clearly permutation invariant, scale invariant, rich, and local.
Since a consistent change can only remove edges between clusters and add edges within clusters, the coco quality function is also monotonic.

In fact, all of Kleinberg's axioms (reformulated in terms of graphs) also hold for $\cf_\text{coco}$, which seems to refute their impossibility result.
However, the impossibility proof can not be directly transfered to graphs,
because it involves a multiplication and division by a maximum distance. In the graph setting this would be multiplication and division by a minimum edge weight, which can be zero.

Still, despite connected components satisfying all previously defined properties (except for strong locality), it is not a very useful quality function.
In many real-world graphs, most nodes are part of one giant connected component \citep{Bollobas2001GiantCoCo}.
We would also like the clustering to be influenced by the weight of edges, not just by their existence.
A natural way to rule out such degenerate quality functions is to require continuity.

\begin{axiom}[Continuity]
  \label{ax:continuity}
  A quality function $\Obj$ is \emph{continuous} if a small change in the graph leads to a small change in the quality.
  Formally, $\Obj$ is continuous if for every $\epsilon > 0$ and every graph $G=(V,E)$ there exists a $\delta > 0$ such that for all graphs $G'=(V,E')$, if $E(i,j) - \delta < E'(i,j) < E(i,j) + \delta$ for all nodes $i$ and $j$,
  then $\Obj(G',C) - \epsilon < \Obj(G,C) < \Obj(G',C) + \epsilon$ for all clusterings $C$ of $G$.
\end{axiom}

Connected components clustering is not continuous, because adding an edge with a small weight $\delta$ between clusters changes the connected components, and hence dramatically changes the quality.

Continuous quality functions have an important property in practice, in that they provide a degree of robustness to noise. A clustering that is optimal with regard to a continuous quality function will still be close to optimal after a small change to the graph.

\subsection{Summary of Axioms}

We propose to consider the following six properties as axioms for graph clustering quality functions,
\begin{enumerate}
  \item Permutation invariance (definition~\ref{ax:permutation-invariance}),
  \item Scale invariance (definition~\ref{ax:scale-invariance}),
  \item Richness (definition~\ref{ax:richness}),
  \item Monotonicity (definition~\ref{ax:monotonicity}),
  \item Locality (definition~\ref{ax:locality}), and
  \item Continuity (definition~\ref{ax:continuity}).
\end{enumerate}
As mentioned previously, for families of quality functions we replace scale invariance by scale invariance for families (definition~\ref{ax:scale-invariant-family}).

In the next section we will show that this set of axioms is consistent by defining a quality function and a family of quality functions that satisfies all of them.
Additionally, the fact that there are quality functions that satisfy only some of the axioms shows that they are (at least partially) independent.




\section{Modularity}\label{sec:modularity}

For graph clustering one of the most popular quality functions is modularity \citep{NewmanGirvan2004}, despite its limitations \citep{Good2010,Traag2011ResolutionLimitScope},
\begin{equation}
  \qmod(G,C) = \sum_{c \in C}\biggl( \frac{\within{c}}{\vol{V}} - \Bigl( \frac{\vol{c}}{\vol{V}} \Bigr)^2 \biggr)
  .
\end{equation}
In this expression $\volin{c}{G} = \sum_{i \in c}\sum_{j \in V} \weight{i}{j}$ is the volume of a cluster,
while $\withinin{c}{G} = \sum_{i,j \in c} \weight{i}{j}$ is the within cluster weight. $v_V$ is the volume of the entire graph. We leave the argument $G$ implicit for readability.
%

It is easy to see that modularity is permutation invariant, scale invariant and continuous.

\begin{theorem}
  Modularity is rich.
  \label{thm:modularity-is-rich}
\end{theorem}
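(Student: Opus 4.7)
The plan is to construct, for any non-trivial partition $C^* = \{c_1, \dots, c_k\}$ (so $k \ge 2$) of a finite set $V$, a concrete graph $G$ whose unique modularity maximizer is exactly $C^*$. I would use the ``self-looped block'' construction: set $\weight{i}{j} = 1$ whenever $i \sim_{C^*} j$ (including the case $i=j$, so every node carries a self-loop of weight $1$) and $\weight{i}{j} = 0$ otherwise. Writing $n_\ell = |c_\ell|$ and $a_{d,\ell} = |d \cap c_\ell|$ for a generic cluster $d$, this gives the clean formulas $\within{d} = \sum_\ell a_{d,\ell}^2$, $\vol{d} = \sum_\ell a_{d,\ell}\, n_\ell$, and $\vol{V} = \sum_\ell n_\ell^2$.

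From these formulas I would compare $\qmod(G, C^*)$ to $\qmod(G, C)$ for an arbitrary $C$ in two reduction steps. First, if some $d \in C$ intersects more than one $C^*$-block, let $C'$ be the refinement of $C$ that replaces $d$ by its non-empty intersections $d \cap c_1, \dots, d \cap c_k$. Because there are no edges between distinct $C^*$-blocks, the total within-cluster weight is preserved, while the squared-volume penalty strictly improves, since $(\sum_\ell \vol{d \cap c_\ell})^2$ strictly exceeds $\sum_\ell \vol{d \cap c_\ell}^2$ as soon as at least two pieces are non-empty — and every non-empty piece has strictly positive volume thanks to the self-loops. Hence $\qmod(G,C') > \qmod(G,C)$, and we may assume $C \sqsubseteq C^*$.

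Second, suppose $C \sqsubseteq C^*$ but $C \ne C^*$, so some $c_\ell$ is split by $C$ into pieces of sizes $s_{\ell,1}, \dots, s_{\ell,m_\ell}$ with $m_\ell \ge 2$. A direct computation (substituting $\within{d} = s_{\ell,i}^2$ and $\vol{d} = s_{\ell,i} n_\ell$ for each piece) shows the contribution of $c_\ell$ to $\qmod(G,C^*) - \qmod(G,C)$ equals
\[
\frac{n_\ell^2 - \sum_i s_{\ell,i}^2}{\vol{V}} \; \Bigl( 1 - \frac{n_\ell^2}{\vol{V}} \Bigr).
\]
The first factor equals $2 \sum_{i<j} s_{\ell,i}\, s_{\ell,j} > 0$, and the second factor is strictly positive because $\vol{V} = \sum_{\ell'} n_{\ell'}^2 > n_\ell^2$, which is where the hypothesis $k \ge 2$ is used. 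Summing over $\ell$ gives $\qmod(G, C^*) > \qmod(G, C)$ for every $C \ne C^*$, establishing richness.

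The one subtle point I expect to have to argue carefully is the role of the self-loops. Without them, a singleton block $c_\ell \in C^*$ would have $\vol{c_\ell} = 0$ and could be absorbed into any neighboring cluster without affecting modularity, destroying uniqueness of the argmax; the self-loops are precisely what makes every non-empty piece in the Step 1 splitting have strictly positive volume. Once they are in place, the remainder of the argument is the two elementary strict inequalities $(\sum x_i)^2 > \sum x_i^2$ (for two or more positive terms) and $n_\ell^2 < \vol{V}$, so the proof amounts to bookkeeping.
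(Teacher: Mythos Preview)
Your proposal is correct and follows essentially the same route as the paper: the same self-looped clique-graph construction, the same first reduction (split any cluster straddling two $C^*$-blocks, using that self-loops force positive volumes), and the same second reduction (compare a proper refinement of $C^*$ to $C^*$ via the inequality $n_\ell^2 < \vol{V}$). The one difference is that the paper also treats the single-block partition $C^* = \{V\}$, for which the clique-graph argument fails (the inequality $n_\ell^2 < \vol{V}$ becomes an equality), and handles it with a separate construction (a loopless complete graph); you sidestep this by reading ``non-trivial'' as $k \ge 2$, so be aware that if the intended reading includes $\{V\}$ you would need that extra case.
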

The proof of Theorem~\ref{thm:modularity-is-rich} is in appendix~\ref{sec:proof-modularity-rich}.

An important aspect of modularity is that volume and within weight are normalized with respect to the total volume of the graph. This ensures that the quality function is scale invariant, but it also means that the quality can change in unexpected ways when the total volume of the graph changes.
This leads us to Theorem~\ref{thm:modularity-not-local}.


\begin{theorem}
  Modularity is not local.
  \label{thm:modularity-not-local}
\end{theorem}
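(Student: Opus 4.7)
The plan is to exploit modularity's normalization by the total volume $\vol{V}$: when we embed the same local configuration into two graphs of different overall size, the relative weighting of the linear term $\within{c}/\vol{V}$ and the quadratic penalty $(\vol{c}/\vol{V})^2$ shifts, and we can arrange for this shift to flip the preferred clustering on a fixed $V_a$.

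First I would reduce the question to a single-variable comparison. Fix any $V_a$ and clusterings $C_a, D_a$ of $V_a$, plus arbitrary clusterings $C_1, C_2$ of the remainder. Because $G_1$ and $G_2$ agree on $V_a$ and its neighborhood, both $\within{c}$ and $\vol{c}$ are the same in $G_1$ and $G_2$ for every $c \subseteq V_a$, and the contribution of the clusters disjoint from $V_a$ to $\qmod$ is identical on the two sides of the comparison. Hence the difference $\qmod(G_i, C_a\cup C_i) - \qmod(G_i, D_a \cup C_i)$ has the form
\begin{equation*}
  \Delta(W_i) \;=\; \frac{A}{W_i} - \frac{B}{W_i^2},
\end{equation*}
where $W_i = \vol{V_i}$, $A = \sum_{c \in C_a}\within{c} - \sum_{d \in D_a}\within{d}$, and $B = \sum_{c \in C_a}\vol{c}^2 - \sum_{d \in D_a}\vol{d}^2$, and neither $A$ nor $B$ depends on anything outside $V_a$.

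Second, I would engineer $A$ and $B$ to have the same sign, which is exactly what is needed for $\Delta$ to flip sign as $W$ grows. Taking $V_a$ to be two unit-weight triangles joined by a single unit-weight bridge, with $C_a$ the partition into the two triangles and $D_a$ the trivial one-cluster partition, a short calculation gives $A = -2$ and $B = -98$. Then $\Delta(W) > 0$ exactly when $W < 49$, so on $G_1 = V_a$ (where $W_1 = 14$) modularity strictly prefers $C_a$, whereas for any $G_2$ consisting of $V_a$ together with a disjoint component large enough to make $W_2 > 49$ (for instance a disjoint copy of $K_{10}$, giving $W_2 = 104$), modularity strictly prefers $D_a$. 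Choosing $C_1 = \emptyset$ and $C_2$ any clustering of the disjoint component, the hypothesis of locality holds but its conclusion fails.

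The only nontrivial step is spotting the $A/W - B/W^2$ structure and realizing that a same-sign pair $(A,B)$ is what enables the sign flip; once that is in hand, two triangles plus a bridge is essentially the smallest example and the rest is verification. I would also remark briefly that this is closely related to (though formally distinct from) the resolution-limit phenomenon of \citet{Fortunato2007ResolutionLimit}: adding mass elsewhere in the graph makes modularity want to merge locally, exactly because the $(\vol{c}/\vol{V})^2$ penalty is suppressed faster than the $\within{c}/\vol{V}$ reward.
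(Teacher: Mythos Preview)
Your proof is correct. Both you and the paper proceed by counterexample, but the routes differ in a way worth noting. The paper simply writes down a tiny example using self-loops (two nodes $a,b$ with loops of weight~$2$ and a mutual edge of weight~$1$, then adds an isolated node $c$ with a loop of weight~$4$) and checks the four modularity values by hand. Your argument first isolates the mechanism: the difference $\qmod(G_i,C_a\cup C_i)-\qmod(G_i,D_a\cup C_i)$ collapses to $A/W_i - B/W_i^{2}$ with $A,B$ determined solely by the local data on $V_a$, so a sign flip is equivalent to $A$ and $B$ being nonzero and of the same sign and the crossover $W=B/A$ lying above the minimal volume $\vol{V_a}$. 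Your bridged-triangles instance then just certifies that such $(A,B)$ are realizable. The paper's example is more economical; your reduction is more explanatory and makes clear that \emph{any} local configuration with $A,B$ of equal sign (and $B/A>\vol{V_a}$) yields a counterexample, which also cleanly ties the phenomenon to the resolution-limit discussion you mention.
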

\begin{proof}
  Consider the graphs
  \begin{align*}
      G_1 = \begin{tikzpicture}
            [baseline=+2mm]
            \draw[use as bounding box] (-0.55,-0.7) rectangle (3.65-1.5,0.7+0.7);
            \node[node](a) at (0,0) {a};
            \node[node](b) at (1.5,0) {b};
            \draw[bend left=15,->] (a) to node[above] {1} (b);
            \draw[bend left=15,->] (b) to node[below] {1} (a);
            \draw[loop,->] (a) to node[above] {2} (a);
            \draw[loop,->] (b) to node[above] {2} (b);
          \end{tikzpicture}&&
      G_2 = \begin{tikzpicture}
            [baseline=+2mm]
            \draw[use as bounding box] (-0.55,-0.7) rectangle (3.65,0.7+0.7);
            \node[node](a) at (0,0) {a};
            \node[node](b) at (1.5,0) {b};
            \node[node](c) at (3,0) {c};
            \draw[bend left=15,->] (a) to node[above] {1} (b);
            \draw[bend left=15,->] (b) to node[below] {1} (a);
            \draw[loop,->] (a) to node[above] {2} (a);
            \draw[loop,->] (b) to node[above] {2} (b);
            \draw[loop,->] (c) to node[above] {4} (c);
          \end{tikzpicture},
  \end{align*}
  which agree on the set $V_a = \{a,b\}$.
  Note that we draw the graphs as directed graphs, to make it clear that each undirected edge is counted twice for the purposes of volume and within cluster weight.
  Now take the clusterings $C_a=\{\{a\},\{b\}\}$ and $D_a=\{\{a,b\}\}$ of $V_a$; $C_1=\{\}$ of $V_1\setminus V_a$; and $C_2=\{\{c\}\}$ of $V_2\setminus V_a$.
  Then \[\qmod(G_1,C_a\cup C_1) = 1/6 > 0 = \qmod(G_1,D_a\cup C_1),\]
  while \[\qmod(G_2,C_a\cup C_2) = 23/50 < 24/50 = \qmod(G_2,D_a\cup C_2).\]
  This counterexample shows that modularity is not local.
\end{proof}

Even without changing the node set, changes in the total volume can be problematic, as shown by the following theorem.

\begin{theorem}
  Modularity is not monotonic.
  \label{thm:modularity-no-monotonicity}
\end{theorem}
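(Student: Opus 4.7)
The plan is to exhibit an explicit counterexample consisting of a graph $G$, a clustering $C$, and a $C$-consistent improvement $G'$ on which modularity strictly decreases. The intuition I want to exploit is that the penalty term $(\vol{c}/\vol{V})^2$ is normalized by the total graph volume, and strengthening an intra-cluster edge simultaneously raises both $\within{c}$ (good for the first term) and the cluster's share $\vol{c}/\vol{V}$ of the total volume (bad for the quadratic penalty). When a cluster is already dense relative to the rest of the graph, the quadratic penalty can dominate the linear gain, breaking monotonicity. This is essentially the same normalization-by-total-volume effect that drove the proof of Theorem~\ref{thm:modularity-not-local}, only now exploited within a fixed node set.

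Concretely, I would take $G$ to consist of two nodes $a$ and $b$, each with a self-loop of weight $1$ and no edge between them, and let $C = \{\{a\},\{b\}\}$ be the singleton partition. Because the only between-cluster weight is already $0$, any $C$-consistent improvement amounts to keeping $E(a,b)$ at $0$ and possibly increasing the self-loop weights. I would then define $G'$ to be the same graph but with the self-loop at $a$ raised to weight $2$; this is manifestly a $C$-consistent improvement.

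The remaining work is a short arithmetic check. Using the convention that each undirected edge contributes both $E(i,j)$ and $E(j,i)$ to volumes and within-weights, for $G$ one gets $\vol{V}=2$, $\vol{a}=\vol{b}=\within{a}=\within{b}=1$, so $\qmod(G,C) = 2(1/2 - 1/4) = 1/2$. For $G'$ one has $\vol{V}=3$, $\vol{a}=\within{a}=2$, $\vol{b}=\within{b}=1$, giving $\qmod(G',C) = (2/3 - 4/9) + (1/3 - 1/9) = 4/9 < 1/2$. This falsifies monotonicity.

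There is no real obstacle here; the only design choice is to pick an example small enough that the competing linear and quadratic effects can be compared by hand, and to arrange the starting clustering (the singleton partition) so that "consistent improvement" reduces to the trivially verifiable condition of only increasing self-loop weights.
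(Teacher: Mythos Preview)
Your counterexample is correct and the arithmetic checks out: with the paper's conventions $v_V=2$, $w_{\{a\}}=w_{\{b\}}=v_{\{a\}}=v_{\{b\}}=1$ in $G$, giving $\qmod(G,C)=1/2$, and in $G'$ one gets $\qmod(G',C)=4/9<1/2$.

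The paper's own proof also gives a small explicit counterexample, but it pulls the other lever available in a $C$-consistent improvement: it takes three nodes $a,b,c$ with $C=\{\{a\},\{b\},\{c\}\}$ and \emph{decreases} the between-cluster edge $E(a,b)$ from $1$ to $0$, leaving a self-loop at $c$ untouched. There the drop in modularity comes from the shrinking total volume $v_V$, which inflates the relative penalty $(\vol{\{c\}}/\vol{V})^2$ on the unaffected cluster $c$. Your example instead \emph{increases} a within-cluster self-loop and shows that the quadratic penalty on the modified cluster itself outpaces the linear gain. Both routes expose the same underlying defect---normalization by $v_V$---but from opposite sides of the definition of consistent improvement; your version is slightly more economical (two nodes versus three), while the paper's version has the rhetorical advantage of showing that even the ``obviously good'' operation of deleting a between-cluster edge can lower modularity.
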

\begin{proof}
  \begin{GraphsAsMatrices}
    Consider the graphs
    \begin{align*}
      G = \begin{bmatrix}
            0 & 1 & 0 & 0 \\
            1 & 0 & 0 & 0 \\
            0 & 0 & 0 & 1 \\
            0 & 0 & 1 & 0 \\
          \end{bmatrix}
      &&
      G' = \begin{bmatrix}
            0 & 0 & 0 & 0 \\
            0 & 0 & 0 & 0 \\
            0 & 0 & 0 & 1 \\
            0 & 0 & 1 & 0
          \end{bmatrix},
    \end{align*}
    and the clustering $C = \{\{1\},\{2\},\{3,4\}\}$.%
  \end{GraphsAsMatrices}
  \begin{GraphsAsImages}
    Consider the graphs
    \begin{align*}
      G = \begin{tikzpicture}
            [baseline=3mm]
            \draw[use as bounding box] (-0.4,-0.4) rectangle (3.1,1.3);
            \node[node](a) at (0,0.4) {a};
            \node[node](b) at (1.5,0.4) {b};
            \node[node](c) at (2.5,0) {c};
            \draw[bend left=15,->] (a) to node[above] {1} (b);
            \draw[bend left=15,->] (b) to node[below] {1} (a);
            \draw[loop,->] (c) to node[above] {2} (c);
          \end{tikzpicture}
      &&
      G' = \begin{tikzpicture}
            [baseline=3mm]
            \draw[use as bounding box] (-0.4,-0.4) rectangle (3.1,1.3);
            \node[node](a) at (0,0.4) {a};
            \node[node](b) at (1.5,0.4) {b};
            \node[node](c) at (2.5,0) {c};
            \draw[bend left=15,->] (a) to node[above] {0} (b);
            \draw[bend left=15,->] (b) to node[below] {0} (a);
            \draw[loop,->] (c) to node[above] {2} (c);
          \end{tikzpicture}%
      ,
    \end{align*}
    and the clustering $C = \{\{a\},\{b\},\{c\}\}$.%
  \end{GraphsAsImages}
  $G'$ is a $C$-consistent improvement of $G$, because the weight of a between-cluster edge is decreased.
  %
  The modularity of $C$ in $G$ is
   $\qmod(G,C) = 1/8$,
  while the modularity of $C$ in $G'$ is
  %
   $\qmod(G',C) = 0$.
  So modularity can decrease with a consistent change of a graph, and hence it is not a monotonic quality function.
\end{proof}

Monotonicity might be too strong a condition.
When the goal is to find a clustering of a single graph, we are not actually interested in the absolute value of a quality function. Rather, what is of interest is the optimal clustering, and which changes to the graph preserve this optimum. At a smaller scaler, we can look at the relation between two clusterings. If $C$ is better then $D$ on a graph $G$, then on what other graphs is $C$ better then $D$?

We therefore define a relative version of monotonicity, in the hopes that modularity does satisfy this weaker version.


\begin{definition}[Relative monotonicity]
  A quality function $\Obj$ is \emph{relatively monotonic} if
  for all graphs $G$ and $G'$ and clusterings $C$ and $D$,
  if $G'$ is a $C$-consistent improvement of $G$
  and $G$ is a $D$-consistent improvement of $G'$
  and $\Obj(G,C) \ge \Obj(G,D)$ then $\Obj(G',C) \ge \Obj(G',D)$.
\end{definition}



\begin{theorem}
  Modularity is not relatively monotonic.
\end{theorem}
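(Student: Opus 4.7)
The plan is to leverage the same volume-sensitivity that drove the failure of monotonicity in Theorem~\ref{thm:modularity-no-monotonicity}—removing an edge between two singletons shrank $\vol{V}$ and thus reduced the relative weight of the heavy self-loop on the remaining node, pushing $\qmod(G,C)$ downward—and pair it with a second clustering $D$ whose modularity is insensitive to the change, so that a strict reversal in the $(C,D)$ ordering falls out directly.

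The convenient foil is $D = \{V\}$, the single-cluster partition, because $\within{V} = \vol{V}$ always forces $\qmod(G,\{V\}) = 1 - 1 = 0$. I would then take $G$ and $G'$ to differ only in the weight of a single edge $(a,b)$, with $a$ and $b$ in different $C$-clusters but the same $D$-cluster, and let that weight drop going from $G$ to $G'$. Both consistency conditions are then automatic: the edge is extra in $C$ so decreasing it is $C$-consistent, and intra in $D$ so raising it from $G'$ back to $G$ is $D$-consistent, while every unchanged edge meets all consistency conditions vacuously. The task thus reduces to arranging $\qmod(G,C) > 0 > \qmod(G',C)$.

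Concretely, I would start from the three-node graph of Theorem~\ref{thm:modularity-no-monotonicity} ($E(a,b) = 1$, self-loop $E(c,c) = 2$), take $G'$ to drop $E(a,b)$ to $0$, and set $C$ to be the singleton partition with $D = \{\{a,b,c\}\}$. The main obstacle is the same degeneracy visible in that earlier proof: on the bare three-node graph $\qmod(G',C)$ lands exactly on $0$, matching $\qmod(G',D)$, so only a non-strict reversal is obtained. To push $\qmod(G',C)$ strictly below zero I would add a small unchanging perturbation edge such as $E(b,c) = \epsilon$, valid for any $\epsilon \in (0, (\sqrt{5}-1)/2)$ (e.g.\ $\epsilon = 1/2$). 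This extra mass inflates $v_b$ and $v_c$ enough in $G'$, where $\vol{V}$ has shrunk, to make the penalty terms $(v_i/\vol{V})^2$ outweigh the $\within{c}/\vol{V}$ terms of the singleton clustering, while being too small to flip $\qmod(G,C) > 0$. A direct computation of the four values $\qmod(G,C)$, $\qmod(G',C)$, $\qmod(G,D)$, $\qmod(G',D)$ then confirms the strict reversal and completes the counterexample.
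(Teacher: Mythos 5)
Your counterexample is correct: with $\epsilon = 1/2$ the values are $\qmod(G,C) = 1/50 > 0 = \qmod(G,D)$ while $\qmod(G',C) = -1/18 < 0 = \qmod(G',D)$, both consistency conditions hold ($C$ sees only a decreased between-cluster edge, $D$ sees only an increased within-cluster edge going from $G'$ to $G$), so relative monotonicity fails. This is essentially the same approach as the paper --- an explicit small counterexample exploiting modularity's dependence on the total volume --- though your choice of $D=\{V\}$, whose modularity is identically zero, pins one side of the comparison and makes the arithmetic somewhat cleaner than the paper's four-node example.
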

\begin{proof}
  \begin{GraphsAsMatrices}%
    Take the graphs with the adjacency matrices
    \begin{align*}
      G = \begin{bmatrix}
            0 & 1 & 0 & 0 \\
            1 & 0 & 0 & 0 \\
            0 & 0 & 8 & 0 \\
            0 & 0 & 0 & 1 \\
          \end{bmatrix}
      & &
      G' = \begin{bmatrix}
            0 & 2 & 0 & 0 \\
            2 & 0 & 0 & 0 \\
            0 & 0 & 8 & 0 \\
            0 & 0 & 0 & 1 \\
           \end{bmatrix},
    \end{align*}
    and the clusterings $C=\{\{1,2,3\},\{4\}\}$ and $D=\{\{1\},\{2\},\{3,4\}\}$.
  \end{GraphsAsMatrices}%
  \begin{GraphsAsImages}%
    Take the graphs
    \begin{align*}
      G = \begin{tikzpicture}
            [baseline=3mm]
            \draw[use as bounding box] (-0.4,-0.5) rectangle (4.1,1.4);
            \node[node](a) at (0,  0.4) {a};
            \node[node](b) at (1.5,0.4) {b};
            \node[node](c) at (2.5,0) {c};
            \node[node](d) at (3.5,0) {d};
            \draw[bend left=15,->] (a) to node[above] {1} (b);
            \draw[bend left=15,->] (b) to node[below] {1} (a);
            \draw[loop,->] (c) to node[above] {8} (c);
            \draw[loop,->] (d) to node[above] {1} (d);
          \end{tikzpicture}
      &&
      G' = \begin{tikzpicture}
            [baseline=3mm]
            \draw[use as bounding box] (-0.4,-0.5) rectangle (4.1,1.4);
            \node[node](a) at (0,  0.4) {a};
            \node[node](b) at (1.5,0.4) {b};
            \node[node](c) at (2.5,0) {c};
            \node[node](d) at (3.5,0) {d};
            \draw[bend left=15,->] (a) to node[above] {2} (b);
            \draw[bend left=15,->] (b) to node[below] {2} (a);
            \draw[loop,->] (c) to node[above] {8} (c);
            \draw[loop,->] (d) to node[above] {1} (d);
          \end{tikzpicture}
        ,
    \end{align*}
    and the clusterings $C=\{\{a,b,c\},\{d\}\}$ and $D=\{\{a\},\{b\},\{c,d\}\}$.%
  \end{GraphsAsImages}
  $G'$ is a $C$-consistent improvement of $G$, because the weight of a within cluster edge is increased.
  $G$ is a $D$-consistent improvement of $G'$, because the weight of a between cluster edge is decreased.
  However
  $\qmod(G,C) = 20/121 > 16/121 = \qmod(G,D)$
  while
  $\qmod(G',C) = 24/169 < 28/121 = \qmod(G',D)$.
  This counterexample shows that modularity is not relatively monotonic.
\end{proof}

\section{Adaptive Scale Modularity}\label{sec:qext}

The problems with modularity stem from the fact that the total volume can change when changes are made to the graph.
It is therefore natural to look at a variant of modularity where the total volume is replaced by a constant $M$,
\begin{equation*}
  \qfixed(G,C) = \sum_{c \in C}\biggl( \frac{\within{c}}{M} - \Bigl( \frac{\vol{c}}{M} \Bigr)^2 \biggr)
  .
\end{equation*}
This quality function is obviously local.
It is also a scale invariant family parameterized by $M$. However, this fixed scale modularity quality function is \emph{not} scale invariant for any fixed scale $M>0$.
%


We might hope that fixed scale modularity would be monotonic, because it doesn't suffer from the problem where changes in the edge weights affect the total volume. Unfortunately, fixed scale modularity has problems when the volume of a cluster starts to exceed $M/2$.
In that case, increasing the weight of within cluster edges starts to decrease the fixed scale modularity.
Looking at a cluster $c$ with volume $v_c = w_c + b_c$,
\begin{equation}
  \frac{\partial \qfixed(G,C)}{\partial w_c}
  = \frac{1}{M} - \frac{2 v_c }{M^2}.
\end{equation}
This derivative is negative when $2 v_c > M$, so in that case increasing the weight of a within-cluster edge will decrease the quality. Hence fixed scale modularity is not monotonic.

%

The above argument also suggests a possible solution: add $2 v_c$ to the normalization factor $M$.
Or more generally, add $\vamount v_c$ with $\vamount \ge 2$,
which leads to the quality function
\begin{equation}
  \qext{M}{\gamma}(G,C) = \sum_{c \in C}\biggl( \frac{\within{c}}{M + \vamount\vol{c}} - \Bigl( \frac{\vol{c}}{M + \vamount\vol{c}} \Bigr)^2 \biggr)
  .
\end{equation}


This \emph{adaptive scale modularity} quality function is clearly still permutation invariant, continuous and local.
For $M=0$ it is also scale invariant.
Since the value of $M$ should scale along with the edge weights, adaptive scale modularity is a scale invariant family parameterized by $M$.
Additionally, we have the following two theorems:

\begin{theorem}
  Adaptive scale modularity is rich for all $M \ge 0$ and $\vamount \ge 1$.
  \label{thm:ext-modularity-is-rich}
\end{theorem}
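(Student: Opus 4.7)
The plan is to prove richness by an explicit construction. Given a set $V$ and a non-trivial partition $C^* = \{c_1^*, \ldots, c_K^*\}$ of $V$, I would define $G$ by setting $\weight{i}{j} = W$ whenever $i \sim_{C^*} j$ (including self-loops, $i=j$) and $\weight{i}{j} = 0$ otherwise, for a constant $W > 0$ to be tuned at the end. Thus each block of $C^*$ becomes a uniformly weighted clique and distinct blocks are disconnected.

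For any clustering $C$ and any cluster $c \in C$, a routine computation with $a_{cj} := |c \cap c_j^*|$ and $n_j := |c_j^*|$ gives $\within{c} = W \sum_j a_{cj}^2$ and $\vol{c} = W \sum_j a_{cj} n_j$; in particular $\qext{M}{\gamma}(G, C^*) = \sum_j f(W n_j^2)$, where $f(x) := \frac{x}{M+\gamma x} - \bigl(\frac{x}{M+\gamma x}\bigr)^2$. The objective is to choose $W$ so that this value strictly exceeds $\qext{M}{\gamma}(G, C)$ for every $C \ne C^*$.

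The core argument is a limit analysis as $W \to \infty$, which works cleanly when $\gamma > 1$. In this limit each cluster contribution converges to $\frac{U_c}{\gamma T_c} - \frac{1}{\gamma^2}$ with $U_c := \sum_j a_{cj}^2$ and $T_c := \sum_j a_{cj} n_j$, so the limit of $\qext{M}{\gamma}(G, C^*)$ equals $K(\gamma-1)/\gamma^2$. For any other $C$ I would establish the key bound $\sum_c U_c/T_c \le K$, with equality iff $C$ is a refinement of $C^*$, via the two-step estimate $U_c/T_c \le \max_j a_{cj}/n_j \le \sum_j a_{cj}/n_j$ combined with the marginal identity $\sum_c a_{cj} = n_j$. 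The limit of $\qext{M}{\gamma}(G, C)$ is then strictly smaller than that of $\qext{M}{\gamma}(G, C^*)$: in the refinement case because $|C| > K$ tightens the $-|C|/\gamma^2$ penalty, and in all other cases (strict coarsenings and crossings) because $\sum_c U_c/T_c < K$ dominates through the $1/\gamma$ coefficient when $\gamma > 1$. Since $V$ has only finitely many partitions, choosing $W$ beyond a suitable threshold makes $C^*$ the unique maximizer.

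The hard part is the boundary regime $\gamma = 1$, where $f$ tends to $0$ at infinity and the leading-order term of the above limit vanishes, so the large-$W$ comparison between $C^*$ and its coarsenings collapses. Here I would work at finite $W$ using the explicit simplification $g(x) := Mx/(M+x)^2$ of $f$ at $\gamma = 1$: the elementary calculus inequality $g(x) + g(y) > g(x+y)$ for $x,y > 0$ rules out strict coarsenings of $C^*$, while each split produces a subcluster with $\within{c} < \vol{c}$ and hence strictly smaller contribution. Combined with the previous argument, this yields richness over the full range $M \ge 0$, $\gamma \ge 1$ claimed by the theorem.
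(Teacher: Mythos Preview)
Your construction and the $\gamma>1$ analysis coincide with the paper's: it too uses the clique graph of $C^*$, proves the same bound $\sum_c w_c/v_c \le K$ via $\max_j a_{cj}/n_j$ (its quantity $f_{C^*}$), and sends the edge weight to infinity. The paper packages the limit quantitatively, deriving explicit $O(M/k)$ error bounds on $q(d)-\bigl(w_d/(\gamma v_d)-1/\gamma^2\bigr)$ and then naming a finite threshold for $k$, whereas you argue by convergence plus finiteness of the partition lattice; both are fine. One small omission in your non-refinement case: besides $\sum_c U_c/T_c < K$ you also need $\sum_c U_c/T_c \le |C|$ (immediate, since each $U_c/T_c\le 1$), for otherwise a coarsening with $|C|<K$ could in principle recoup via the smaller $-|C|/\gamma^2$ penalty more than it loses from the $S<K$ deficit.

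Your $\gamma=1$ paragraph actually goes further than the paper, whose key lemma requires $\beta=1/\gamma<1$ and so leaves $\gamma=1$ uncovered. The strict subadditivity $g(x)+g(y)>g(x+y)$ (valid for $M>0$, since $g(x)/x=M/(M+x)^2$ is strictly decreasing) correctly eliminates strict coarsenings at every $W$. But your clause for splits is not yet an argument: ``$w_c<v_c$ hence strictly smaller contribution'' only gives $q(c)<g(v_c)$, and subadditivity of $g$ then points the wrong way when you sum over the pieces of a block. The fix is to reuse your own limit at $\gamma=1$: any non-coarsening $C$ has some cluster with $U_c/T_c<1$, so $Q_{M,1}(G,C)\to\sum_c(U_c/T_c-1)<0$, while $Q_{M,1}(G,C^*)=\sum_j g(Wn_j^2)>0$ for every $W>0$ when $M>0$; choose $W$ large enough for the finitely many non-coarsenings, and coarsenings are already beaten at every $W$. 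Finally, note that the corner $(M,\gamma)=(0,1)$ resists your construction and in fact any construction: there $Q_{0,1}(G,C)=\sum_c(w_c/v_c-1)\le 0$ with equality whenever each cluster is a union of connected components, so $\{V\}$ always attains the maximum and no $C^*\neq\{V\}$ can be the \emph{unique} optimizer. That is a defect in the theorem as stated rather than in your plan.
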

\begin{theorem}
  Adaptive scale modularity is monotonic for all $M \ge 0$ and $\vamount \ge 2$.
  \label{thm:ext-modularity-is-monotonic}
\end{theorem}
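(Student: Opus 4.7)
The plan is to exploit the fact that $\qext{M}{\vamount}$ decomposes as a sum of per-cluster contributions, each depending only on $\within{c}$ and $\vol{c}$, and then to show that each contribution is separately non-decreasing along any $C$-consistent improvement. Writing $f(\within{c},\vol{c})$ for the $c$-th summand, a consistent improvement modifies each edge in exactly one of two ways from the point of view of cluster $c$: a within-cluster edge (or self-loop) in $c$ can only grow, and it contributes equally to $\within{c}$ and $\vol{c}$; while an edge leaving $c$ can only shrink, and it contributes only to $\vol{c}$. Edges neither inside nor incident to $c$ do not affect $f(\within{c},\vol{c})$ at all.

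I would then reparameterise $f$ in the coordinates $(\within{c}, b_c)$, where $b_c = \vol{c} - \within{c} \ge 0$ is the between-cluster weight of $c$. In these coordinates a within-cluster increase moves only $\within{c}$ upward with $b_c$ held fixed, and a between-cluster decrease moves only $b_c$ downward with $\within{c}$ held fixed, so monotonicity of $f$ reduces to the two single-variable inequalities $\partial f/\partial \within{c} \ge 0$ and $\partial f/\partial b_c \le 0$ throughout the non-negative quadrant. The second is immediate, because $\partial f/\partial b_c$ evaluates to a sum of manifestly non-positive terms. After clearing the common denominator $(M+\vamount\vol{c})^3$, the first reduces to
\[
  M^2 + M\vol{c}(\vamount - 2) + M\vamount b_c + \vamount^2 b_c \vol{c} \ge 0,
\]
where every term is non-negative, using the hypothesis $\vamount \ge 2$ exactly for the second term.

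The main, and essentially only, subtlety is choosing these coordinates: in the original $(\within{c}, \vol{c})$ variables a within-cluster change moves diagonally, so one would have to control the sign of $\partial_{\within{c}}f + \partial_{\vol{c}}f$ rather than a pure partial, and the role of the $\vamount \ge 2$ bound is far less transparent. Once the $(\within{c}, b_c)$ split is in hand, everything else is routine: integrate the per-cluster inequalities along the straight-line path of edge-weight changes from $G$ to $G'$ and sum over $c \in C$ to conclude $\qext{M}{\vamount}(G',C) \ge \qext{M}{\vamount}(G,C)$.
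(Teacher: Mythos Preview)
Your proposal is correct and follows essentially the same route as the paper: reparameterise each cluster's contribution in $(w_c,b_c)$ with $v_c = w_c + b_c$, check the two partial derivatives, and observe that the numerator of $\partial f/\partial w_c$ (your displayed expression) is termwise non-negative precisely when $\vamount \ge 2$, while $\partial f/\partial b_c$ is termwise non-positive. The paper's computation of $\partial q/\partial w$ is algebraically identical to yours after expanding $\vamount b_c(M + \vamount v_c) = M\vamount b_c + \vamount^2 b_c v_c$.
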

The proofs of these theorems can be found in appendices~\ref{sec:proof-ext-rich} and~\ref{sec:proof-ext-monotonic}.

This shows that adaptive scale modularity satisfies all six axioms we have defined for families of graph clustering quality functions, and the six axioms for single quality functions when $M=0$.
This shows that our extended set of axioms is consistent.

\subsection{Relation to Other Quality Functions}\label{sec:qext-related}

Interestingly, in the limit as $M$ goes to $0$, the adaptive-scale quality function becomes similar to normalized cut \citep{ShiMalik2000NormalizedCut} with an added constant,
\begin{align*}
  \qext{0}{\gamma}(G,C) = \frac{1}{\gamma}\sum_{c \in C} \Bigl( \frac{w_c}{v_c} - \frac{1}{\gamma} \Bigr).
\end{align*}
This $0$-adaptive modularity is also scale invariant as a single quality function.

Conversely, when $M$ goes to infinity the quality goes to $0$. However, the quality function approaches unnormalized cut in behavior:
\begin{align*}
  \lim_{M\to\infty} M\cdot \qext{M}{\gamma}(G,C) = \sum_{c \in C} w_c.
\end{align*}

This expression is similar to the Constant Potts model (CPM) by \citet{Traag2011ResolutionLimitScope},
\begin{equation}
  \qcpm(G,C) = \sum_{c \in C}\biggl( \within{c} - \gamma n_c^2 \biggr).
\end{equation}
In contrast to the quality functions discussed thus far, CPM uses the number of nodes instead of volume to control the size of clusters.
Like adaptive scale modularity, the constant Potts model satisfies all six axioms (as a family).

As stated before,
the fixed scale and adaptive scale modularity quality functions are a scale invariant family; they are not scale invariant for a fixed value of $M$ (except for $M=0$).
This is not a large problem in practice, since scale invariance is often sacrificed to overcome the resolution limit of modularity \citep{Fortunato2007ResolutionLimit}.
In fact, fixed scale modularity is proportional to the quality function introduced by \citet{Reichardt2004},
\begin{align*}
  Q_\text{RB}(G,C) 
    = \sum_{c \in C} \Bigr( \within{c} - \gamma_\text{RB} \frac{\vol{c}^2}{\vol{V}}\Bigl) 
    = M \cdot \qfixed(G,C),
\end{align*}
with $M = \vol{V} / \gamma_\text{RB}$.

\subsection{Parameter Dependence Analysis}\label{sec:qext-resolution}

There has been a lot of interest in the so called resolution limit of modularity.%

This problem can be illustrated with a simple graph that consists of a ring of cliques, where each clique is connected to the next one with a single edge.
%
We would like the clusters in the optimal clustering to correspond to the cliques in the ring.
It was observed by \citet{Fortunato2007ResolutionLimit} that, as the number of cliques in the ring increases, at some point the clustering with the highest modularity will have multiple cliques per cluster.

This resolution problem stems from the fact that the behavior of modularity depends on the total volume of the graph.
Both the fixed scale and adaptive scale modularity quality functions instead have a parameter $M$, and hence do not suffer from this problem.
In fact, any local quality function will not have a resolution limit in the sense of \citeauthor{Fortunato2007ResolutionLimit}.
A similar observation was made by \citet{Traag2011ResolutionLimitScope} in the context of modularity like quality functions.

%
%
%
%

In real situations graphs are not uniform as in the ring-of-cliques model.
%
But we can still take simple uniform problems as a building block for larger and more complex graphs, since for local quality functions the rest of the network doesn't matter.
Therefore we will look at a simple problem with two subgraphs of varying sizes connected by a varying number of edges.
More precisely, we take two cliques each with within weight $w$, connected by edges with weight $b$. The total volume of this (sub)graph is then $2w+2b$.

There are three possible outcomes when clustering such a two-clique network: (1) the optimal solution has a single cluster; (2) the optimal solution has two clusters, corresponding to the two cliques; (3) the optimal solution has more than two clusters, splitting the cliques apart.
See Figure~\ref{fig:outcomes-illustration} for an illustration.
Which of these outcomes is desirable depends on the circumstances.

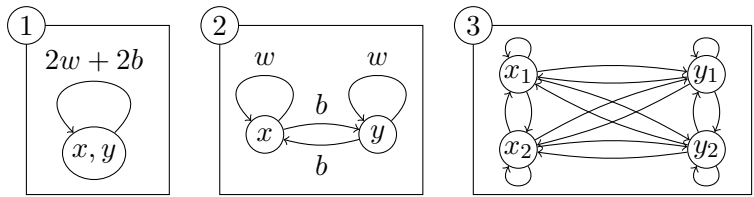
\begin{figure}[t]
  \centering
  \begin{tikzpicture}
    [baseline=3mm]
    \draw[use as bounding box] (-0.9,-0.8) rectangle (1.0,1.45);
    \node[ellipse,draw,inner sep=0,minimum size=7mm,minimum width=8mm] (a) at (0,-0.25) {$x,y$};
    \draw[looseness=6,min distance=4mm,->] (a) to node[above] {$2w+2b$} (a);
    \node[node,fill=white] at (-0.9,1.45) {1};
  \end{tikzpicture}
  \hspace*{4mm}
  \begin{tikzpicture}
    [baseline=3mm]
    \draw[use as bounding box] (-0.6,-0.8) rectangle (2.1,1.45);
    \node[node] (a) at (0,0) {$x$};
    \node[node] (b) at (1.5,0) {$y$};
    \draw[bend left=15,->] (a) to node[above] {$b$} (b);
    \draw[bend left=15,->] (b) to node[below] {$b$} (a);
    \draw[loop,->] (a) to node[above] {$w$} (a);
    \draw[loop,->] (b) to node[above] {$w$} (b);
    \node[node,fill=white] at (-0.6,1.45) {2};
  \end{tikzpicture}
  \hspace*{4mm}
  \begin{tikzpicture}
    [baseline=3mm]
    \draw[use as bounding box] (-0.6,-0.8) rectangle (3.1,1.45);
    \node[node,fill=white] at (-0.6,1.45) {3};
    \node[node] (a1) at (0,0-0.2) {$x_2$};
    \node[node] (a2) at (0,1-0.2) {$x_1$};
    \node[node] (b1) at (2.5,0-0.2) {$y_2$};
    \node[node] (b2) at (2.5,1-0.2) {$y_1$};
    %
    \draw[bend left=25,->] (a1) to node[left] {} (a2);
    \draw[bend left=25,->] (a2) to node[right] {} (a1);
    \draw[in=-90-30,out=-90+30,min distance=4mm,->] (a1) to node[left] {} (a1);
    \draw[in=90+30,out=90-30,min distance=4mm,->] (a2) to node[left] {} (a2);
    \draw[bend left=25,->] (b1) to node[left] {} (b2);
    \draw[bend left=25,->] (b2) to node[right] {} (b1);
    \draw[in=-90-30,out=-90+30,min distance=4mm,->] (b1) to node[left] {} (b1);
    \draw[in=90+30,out=90-30,min distance=4mm,->] (b2) to node[left] {} (b2);
    \draw[bend left=8,->] (a1) to node[above] {} (b1);
    \draw[bend left=8,->] (b1) to node[below] {} (a1);
    \draw[bend left=8,->] (a1) to node[above] {} (b2);
    \draw[bend left=8,->] (b1) to node[below] {} (a2);
    \draw[bend left=8,->] (a2) to node[above] {} (b1);
    \draw[bend left=8,->] (b2) to node[below] {} (a1);
    \draw[bend left=8,->] (a2) to node[above] {} (b2);
    \draw[bend left=8,->] (b2) to node[below] {} (a2);
  \end{tikzpicture}
  \caption{
    An illustration of the possible outcomes when clustering a two-clique network.
    Clusters are indicated by circles.
    In outcome (3), the vertical edges each have weight $w/4$, while the horizontal and diagonal ones have weight $b/4$.
  }
  \label{fig:outcomes-illustration}
\end{figure}

Another heterogeneous resolution limit model was proposed by \citet{Lancichinetti2011limits}.
In this situation there are two cliques of equal size connected by a single edge, and a random subgraph. Now the ideal solution would be to find three clusters, one for each clique and one for the random subgraph.
The optimal split of the random subgraph will roughly cut it in half, with a fixed fraction of the volume being between the two clusters \citep{Reichardt2007PartitioningAndModularity}.
So this model can be considered as a combination of two instances of our simpler problem, one for the two cliques and one for the random subgraph\footnote{\Citeauthor{Lancichinetti2011limits} include edges between the cliques and the random subgraph to ensure that the entire network is connected, these edges are not relevant to the problem}.
Hence, we want outcome (2) for the cliques, and outcome (1) for the random subgraph.
%

\begin{figure}[h]
  \centering
  \includegraphics[width=0.8\linewidth]{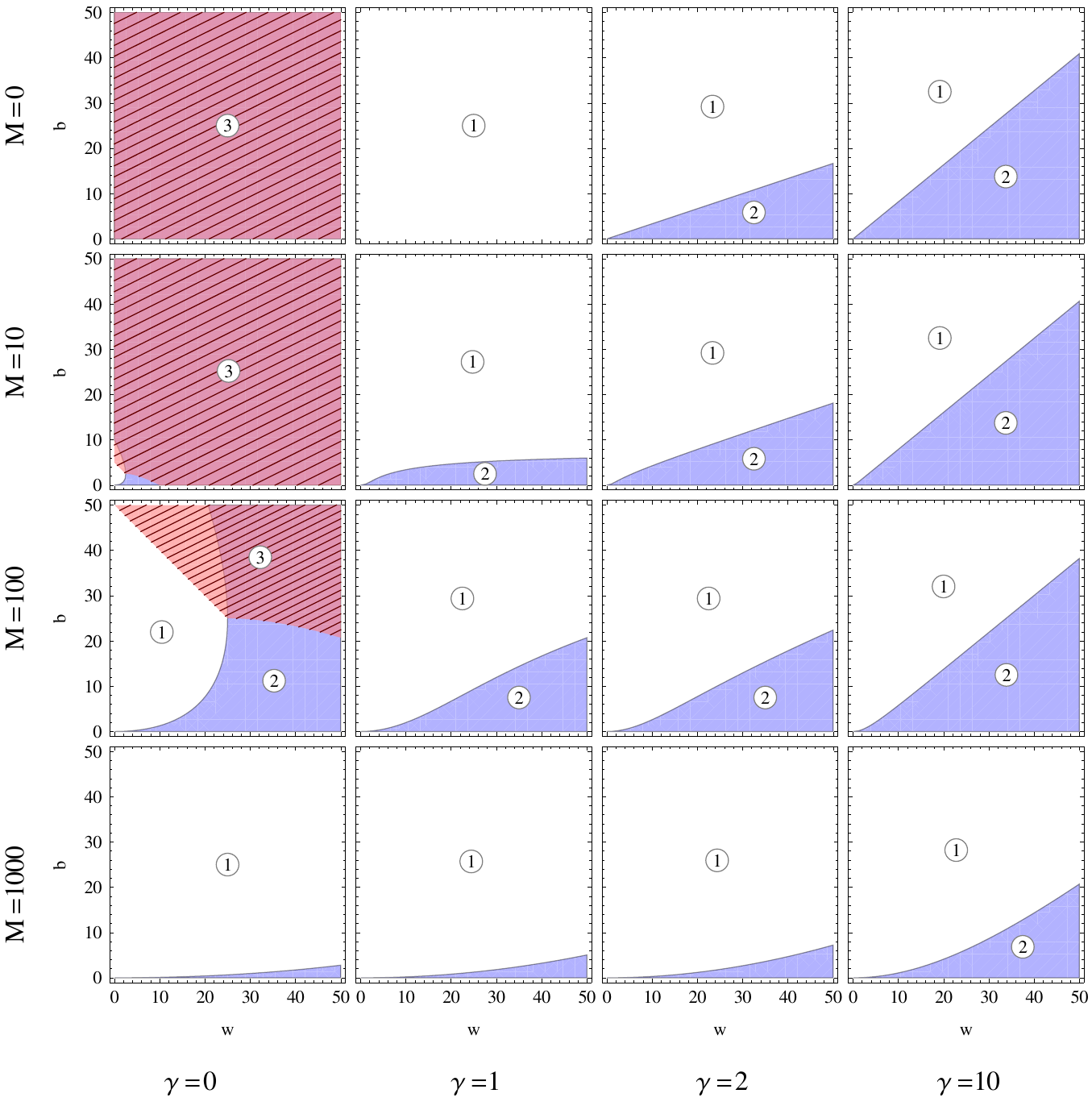}
  \caption{
    The behavior of $\qext{M}{\gamma}$ for varying parameter values.
    The graph consists of two subgraphs with $w$ internal weight each, connected by an edge with weigh $b$.
    Hence the volume of the total graph is $2w+2b$.
    In region (1) the optimal clustering has a single cluster,
    In region (2) (light blue) the optimal clustering separates the subgraphs.
    In region (3) (red, hatched) the subgraphs themselves will be split apart.
  }
  \label{fig:gext-wb}
\end{figure}

In Figure~\ref{fig:gext-wb} we show which graphs give which outcomes for adaptive scale modularity with various parameter settings.
The first column, $\gamma=0$, is of particular interest, since it corresponds to fixed scale modularity and hence also to $\Obj_\text{RB}$ and to modularity in certain graphs. In the third row we can see that when $2v=2w+2b > M=100$ the cliques are split apart. This is precisely the region in which monotonicity no longer holds.
Overall, the parameter $M$ has the effect of determining the scale; each row in this figure is merely the previous row magnified by a factor $10$. Increasing $M$ has the effect of merging small clusters. On the other hand, the $\gamma$ parameter controls the slope of the boundary between outcomes (1) and (2), i.e. the fraction of edges that should be within a cluster. This is most clearly seen when $M=0$, while otherwise the effect of $M$ dominates for small clusters.


%

%

\section{Conclusion and Open Questions}

\begin{table}
  \newcommand{\yes}{$\checkmark$}
  \newcommand{\na}{n.a.}
  \newcommand{\no}{$-$}
  \begin{tabular}{lccccccc}
    & \rotatebox{90}{Permutation invariance} &
      \rotatebox{90}{Scale invariance} &
      \rotatebox{90}{Scale invariance (family)} & 
      \rotatebox{90}{Richness} &
      \rotatebox{90}{Monotonicity} &
      \rotatebox{90}{Locality} &
      \rotatebox{90}{Continuity}
    \\
    \hline
    Connected components & \yes & \yes  & \na  & \yes & \yes & \yes & \no \\
    Modularity      & \yes & \yes  & \na  & \yes & \no & \no & \yes \\
    \citet{Reichardt2004}       & \yes & \yes  & \yes & \yes & \no & \no & \yes \\
    Fixed scale
         modularity & \yes & $M=0$ & \yes & \yes & \no & \yes & \yes \\
    Adaptive scale
         modularity & \yes & $M=0$ & \yes & $\gamma \ge 1$ & $\gamma \ge 2$ & \yes & \yes \\
    Constant Potts Model \citep{Traag2011ResolutionLimitScope} & \yes & \no & \yes & $\gamma>0$ & \yes & \yes & \yes \\
    Normalized cut  & \yes & \yes  & \na  & \no  & \yes & \yes & \yes \\
  \end{tabular}
  \caption{Overview of quality functions discussed in this paper and the properties they satisfy.}
  \label{tbl:quality-functions-and-axioms}
\end{table}

In this paper we presented an axiomatic framework for graph clustering quality functions consisting of six properties. We showed that modularity does not satisfy the monotonicity property.
This motivated the derivation of a new family of quality functions, adaptive scale modularity, that satisfies all properties and has standard graph clustering quality functions as special cases.
%
Results of an experimental parameter dependence analysis showed the high flexibility of adaptive scale modularity. 
However, adaptive scale modularity should not be considered the solution to all the problems of modularity, but rather an example of how axioms can be used in practice. 

An overview of the discussed axioms and quality functions can be found in table~\ref{tbl:quality-functions-and-axioms}.
Many more quality functions have been proposed in the literature, so this list is by no means exhaustive.
An interesting topic for future research is to make
 a survey of which existing quality functions satisfy which of the proposed properties.

We also investigated resolution-limit-free quality functions as defined in \citep{Traag2011ResolutionLimitScope}.
As illustrated in section \ref{sec:qext-resolution}, adaptive scale modularity  allows to perform clustering  at various resolutions, by varying the values of its two parameters. However it is not resolution-limit-free. 

Our paper did not address questions such as finding a best quality function \citep*{Almeida2011}, or selecting a significant resolution scale \citep{Traag2013significance}. The aim  was to provide necessary conditions about what a good quality function is, in order to rule out and/or to improve quality functions. The  proposed axioms and the introduction of adaptive scale modularity are an effort in this direction.

We also did not address the question of finding a clustering with the highest quality.
Finding the optimal value of quality functions such as modularity is NP-hard \citep{Brandes2008modularityNPHard}, but several heuristic and approximation algorithms have been developed. One class of algorithms uses a divisive approach, see for instance \citet{Newman2006spectral,RuanZhang2008HQCut}.
For such a tactic to be valid, an optimal or close to optimal clustering of a subgraph should also be a near optimal clustering of the entire graph.
This is ensured by locality. Recently \citet{DinhT13} proposed polynomial-time approximation algorithms for the modularity maximization in the context of scale free networks. 
It would be interesting to investigate the suitability of these algorithms for adaptive scale modularity maximization.

In this work we have only looked at non-negative weights, undirected graphs, and only at hard partitioning.
An extension to graphs with negative weights, to directed graphs and to overlapping clusters remains to be investigated.
Another open problem is how to use these axioms for reasoning about quality functions and clustering algorithms.

\section*{Acknowledgments}
We thank the reviewers for their comments.
This work has been partially funded by the Netherlands Organization for Scientific Research (NWO) within the NWO project 612.066.927.

\appendix

\section{Proof of Theorem~\ref{thm:modularity-is-rich} (Modularity is Rich)}
\label{sec:proof-modularity-rich}

The proofs of richness rely on clique graphs,

\begin{definition}[Clique graph]
  Let $V$ be a set of nodes, $C$ be a partition of $V$, and $k$ be a positive constant.
  The \emph{clique graph of $C$ with edge weight $k$} is defined as
  $G = (V,E)$ where $E(i,j) = k$ if $i \sim_C j$ and $E(i,j) = 0$ otherwise.
\end{definition}

\begin{proof}
  
  Let $V$ be a set of nodes and $C\neq\{V\}$ be a clustering of $V$.
  %
  Let $G=(V,E)$ be a clique graph of $C$ with edge weight $1$.
  Note that $\weight{i}{i} = 1$, so any possible cluster will have a positive volume.
  Let $D$ be a clustering of $G$ with maximal modularity.
  
  Suppose that there is a cluster $d \in D$ that contains $i,j \in d$ with $i \not\sim_C j$.
  Then we can split the cluster into $d_1 = \{k \in d \mid k \sim_C i\}$ and $d_2 = \{k \in d \mid k \not\sim_C i\}$.
  Because there are no edges between nodes in $d_1$ and nodes in $d_2$,
  it is the case that $w_d = w_{d_1} + w_{d_2}$.
  Both $d_1$ and $d_2$ are non-empty and have a positive volume, so
   $v_d^2 = (v_{d_1} + v_{d_2})^2 < v_{d_1}^2 + v_{d_2}^2$.
  Therefore $\qmod(G,D) < \qmod(G,D\setminus \{d\} \cup \{d_1,d_2\})$.
  So $D$ does not have maximal modularity, which is a contradiction.
  
  Suppose, on the other hand that all clusters $d \in D$ are a subset of some cluster in $C$, i.e. $D$ is a refinement of $C$.
  Then either $D = C$,
   or there are two clusters $d_1,d_2 \in D$ that are both a subset of the same cluster $c \in C$.
  In the latter case we can combine the two clusters into $d=d_1\cup d_2$.
  The within weight of this combined cluster is $w_d = |d|^2=w_{d_1}+w_{d_2}+2|d_1||d_2|$.
  The squared volume of the combined cluster is $v_d^2 = |d|^2|c|^2 = v_{d_1}^2 + v_{d_2}^2 + 2|d_1||d_2||c|^2$.
  So this changes increases the modularity by
  \begin{align*}
    &\qmod(G,D\setminus \{d_1,d_2\} \cup \{d\}) - \qmod(G,D)
      \\&\quad= 2 |d_1||d_2| / v_V
        - 2 |d_1||d_2| |c|^2 / v_V^2
      \\&\quad= 2 |d_1||d_2| (v_V - |c|^2) / v_V^2
      > 0,
  \end{align*}
  which contradicts the assumption that $D$ has maximal modularity.
  Therefore the only optimal clustering of $G$ is $C$.
  Note that the above inequality only holds when $|c|^2 = v_c < v_V$, which is the case because $C \neq \{V\}$.
  
  
  When $C=\{V\}$, a clique graph will not work; because both $\{V\}$ and the clustering that assigns half the nodes to one cluster, and half to another have modularity equal to $0$.
  In this case, instead define $G=(V,E)$ by $E(i,j) = 1$ if $i \neq j$ and $0$ if $i = j$.
  Then the modularity for $C$ is $q(G,\{V\}) = 0$.
  Any cluster $d$ in a clustering $D$ will have $v_d = |d|(|V|-1)$ and $w_d = |d|(|d|-1)$.
  Therefore the contribution of this cluster to the total quality is $-|d|(|V|-|d|)/(|V|^2(|V|-1))$, which is negative when $|d|<|V|$.
  So the modularity of any clustering other than $\{V\}$ will be negative, hence $\{V\}$ is the only optimal clustering.
  
  Since for every $C$ we can construct a graph where $C$ is the only optimal clustering, modularity is rich.
  
\end{proof}

\section{Proof of Theorem~\ref{thm:ext-modularity-is-rich} (Adaptive Scale Modularity is Rich)}
\label{sec:proof-ext-rich}


Denote by $\maxfrac{C}(d)$ the largest fraction of any cluster from $C$ that is contained in a cluster $d$.
\begin{align*}
  \maxfrac{C}(d) = \max_{c \in C} \frac{|c \cap d|}{|c|}.
\end{align*}
For any clustering $D$ we have that
\begin{align}
  \sum_{d \in D} \maxfrac{C}(d)
  = \sum_{d \in D} \max_{c \in C} \frac{|c \cap d|}{|c|}
  \le \sum_{d \in D} \sum_{c \in C} \frac{|c \cap d|}{|c|}
  = |C|.
  \label{eq:frac_c}
\end{align}
And since $\maxfrac{C}(d) \le 1$ for all clusters $d$, we also have that
\begin{align}
  \sum_{d \in D} \maxfrac{C}(d) \le |D|.
  \label{eq:frac_d}
\end{align}

\begin{lemma}
  \label{lem:bound-frac}
  For a clique graph of $C$ it is the case that $w_d / v_d \le \maxfrac{C}(d)$.
\end{lemma}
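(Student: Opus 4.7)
The plan is to expand $\within{d}$ and $\vol{d}$ using the block structure of $C$, and then reduce the inequality to the elementary fact that a nonnegative weighted average is bounded above by its largest term.

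Let $k > 0$ be the edge weight of the clique graph, and for each $c \in C$ write $a_c = |c \cap d|$. Since $\weight{i}{j} = k$ exactly when $i$ and $j$ lie in a common cluster of $C$ (including the diagonal case $i = j$), the two definitions unfold to
\begin{align*}
  \within{d} &= k \sum_{c \in C} a_c^2,\\
  \vol{d} &= k \sum_{c \in C} a_c\,|c|.
\end{align*}
The first identity counts ordered pairs in $d \times d$ that share a $C$-cluster; the second counts, for each $i \in d$, the number of nodes to which $i$ is connected, namely the size of its $C$-cluster.

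With $M := \maxfrac{C}(d)$ we have $a_c \le M\,|c|$ for every $c \in C$, so multiplying by the nonnegative quantity $a_c$ gives $a_c^2 \le M\,a_c\,|c|$. Summing over $c$ and dividing by $\vol{d}$, which is strictly positive for any nonempty $d$ because $k > 0$ and at least one $a_c$ is positive, yields $\within{d}/\vol{d} \le M$, which is the claimed bound.

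There is no real obstacle here; the only points needing mild care are (i) confirming that the clique-graph definition includes self loops, so that $a_c^2$ rather than $a_c(a_c-1)$ appears in $\within{d}$, and (ii) checking $\vol{d} > 0$ so that the division is well defined. Observe also that the argument is independent of the choice of $k$, so the conclusion holds uniformly over all clique graphs of $C$, which is exactly the form in which the lemma will be used in the proof of Theorem~\ref{thm:ext-modularity-is-rich}.
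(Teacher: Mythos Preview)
Your argument is correct and is essentially identical to the paper's own proof: both compute $\within{d} = k\sum_{c} |c\cap d|^2$ and $\vol{d} = k\sum_{c} |c\cap d|\,|c|$, then bound each summand of $\within{d}$ via $|c\cap d| \le \maxfrac{C}(d)\,|c|$. Your added remarks about self-loops and the positivity of $\vol{d}$ are accurate and merely make explicit points the paper leaves implicit; the only cosmetic issue is your reuse of the symbol $M$, which in the paper already denotes the scale parameter of adaptive scale modularity.
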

\begin{proof}
  Given a cluster $d$ and a clique graph $G$ of $C$ with weight $k>0$,
  the volume of $d$ is
  \[v_d = \sum_{c \in C} k |c \cap d||c|,\]
  and the within cluster weight is
  \[w_d = \sum_{c \in C} k |c \cap d|^2.\]
  Therefore
  \begin{align*}
    w_d
    \le \sum_{c \in C} k |c \cap d||c|\maxfrac{C}(d)
    = v_d \maxfrac{C}(d).
  \end{align*}
  And hence
  $w_d / v_d \le \maxfrac{C}(d)$.
\end{proof}

\begin{lemma}
  Let $G$ be the clique graph of a clustering $C$ with weight $k$, and let $0 < \beta < 1$ be a constant.
  Then $\sum_{d \in D} (w_d / v_d - \beta) = (1 - \beta)|C|$ if $D = C$,
  while $\sum_{d \in D} (w_d / v_d - \beta) < (1 - \beta)|C| - \epsilon$ if $D \neq C$,
  where $\epsilon = \min(\beta,1-\beta,1/|V|)/2$.
  \label{lem:ext-modularity-zero-rich}
\end{lemma}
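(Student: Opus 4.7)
The plan is to handle the two cases of the statement separately. When $D = C$, I compute directly: since $G$ is the clique graph of $C$ with weight $k$, for each $c \in C$ we have $w_c = k|c|^2 = v_c$, so $w_c/v_c = 1$ and therefore $\sum_{d \in D}(w_d/v_d - \beta) = |C|(1-\beta)$, which is the claimed equality.

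For $D \neq C$, I combine Lemma~\ref{lem:bound-frac} (giving $w_d/v_d \le \maxfrac{C}(d)$ for every $d$) with the two bounds \eqref{eq:frac_c} and \eqref{eq:frac_d} to get
\[
  \sum_{d \in D} (w_d/v_d - \beta) \;\le\; \min(|C|,|D|) - \beta|D|.
\]
I then split into three subcases according to $|D|$ versus $|C|$. If $|D| \ge |C|+1$, the bound is at most $|C| - \beta(|C|+1) = (1-\beta)|C| - \beta$, so the slack is at least $\beta$. If $|D| \le |C|-1$, the bound is at most $(1-\beta)|D| \le (1-\beta)|C| - (1-\beta)$, so the slack is at least $1-\beta$.

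The remaining subcase, $|D| = |C|$ with $D \neq C$, is the main obstacle: neither \eqref{eq:frac_c} nor \eqref{eq:frac_d} provides any slack on its own, so I need to sharpen \eqref{eq:frac_c}. I would first observe that if $D$ refined $C$ then sending each $d \in D$ to its containing $c \in C$ forces $|D| \ge |C|$ with equality only when $D = C$; hence under $|D| = |C|$ and $D \neq C$, $D$ cannot refine $C$, and there must exist some $d \in D$ and two distinct $c_1, c_2 \in C$ with $|c_1 \cap d|, |c_2 \cap d| \ge 1$. In the derivation $\sum_d \maxfrac{C}(d) = \sum_d \max_c |c \cap d|/|c| \le \sum_d \sum_c |c \cap d|/|c| = |C|$, the ``max versus sum'' step on that particular $d$ then discards at least the subdominant term, which is bounded below by $1/|V|$ since cluster sizes are positive integers at most $|V|$. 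This yields $\sum_d \maxfrac{C}(d) \le |C| - 1/|V|$, and hence slack at least $1/|V|$ in this subcase.

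Combining the three subcases, the slack in the $D \neq C$ case is at least $\min(\beta,1-\beta,1/|V|)$, which strictly exceeds $\epsilon = \min(\beta,1-\beta,1/|V|)/2 > 0$, giving the required strict inequality. The only genuinely nontrivial step is the non-refinement argument together with the quantitative sharpening of \eqref{eq:frac_c} by $1/|V|$ in the $|D| = |C|$ subcase; all other steps are routine.
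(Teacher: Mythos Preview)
Your proof is correct and uses the same ingredients as the paper---Lemma~\ref{lem:bound-frac} together with the bounds \eqref{eq:frac_c} and \eqref{eq:frac_d}---but packages them differently. The paper argues by contradiction: assuming the inequality fails, it first forces $|D|=|C|$, then shows that if $\maxfrac{C}(d)<1$ for some $d$ one has $\maxfrac{C}(d)\le 1-1/|V|$, sharpening \eqref{eq:frac_d} to reach a contradiction, and finally observes that $\maxfrac{C}(d)=1$ for all $d$ together with $|D|=|C|$ forces $D=C$. You instead give a direct case split on $|D|$ versus $|C|$, and in the critical case $|D|=|C|$ you obtain the $1/|V|$ slack from the complementary observation (that some $d$ meets two clusters of $C$, since $D$ cannot refine $C$), thereby sharpening \eqref{eq:frac_c} rather than \eqref{eq:frac_d}. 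The two $1/|V|$ arguments are dual to one another and equally valid; your direct organization is arguably cleaner and makes it transparent which of $\beta$, $1-\beta$, $1/|V|$ governs each subcase.
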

\begin{proof}
  Suppose that $D = C$, then for every cluster $c \in C$, $w_c = v_c = k|c|^2$, and so
  \begin{equation*}
     \sum_{c \in C} \Bigl(\frac{w_d}{v_d} - \beta \Bigr) = (1 - \beta)|C|.
  \end{equation*}
  
  Otherwise, $D \neq C$.
  Assume that $\sum_{d \in D} (w_d / v_d - \beta) \ge (1 - \beta)|C| - \min(\beta,1/|V|)/2$.
  By Lemma~\ref{lem:bound-frac},
  \begin{align*}
       &   |C| - \beta(|C| + 1)
    \\<&   |C| - \beta|C| - \epsilon
    \\\le& \sum_{d \in D} (\frac{w_d}{v_d} - \beta)
    \\\le& \sum_{d \in D} (\maxfrac{C}(d) - \beta)
    \\\le& |C| - \beta|D|.
  \end{align*}
  Since $\beta > 0$, this implies that $|D| < |C| + 1$.
  
  Additionally, since $\maxfrac{C}(d) \le 1$ for all clusters $d \in D$,
  \begin{align*}
         & (1 - \beta)(|C| - 1)
    \\ < & (1 - \beta)|C| - \epsilon
    \\\le& \sum_{d \in D} (\maxfrac{C}(d) - \beta)
    \\\le& (1 - \beta)|D|
  \end{align*}
  Since $\beta < 1$, this implies that $|D| > |C| - 1$.
  Hence $|D| = |C|$.
  
  Suppose that $\maxfrac{C}(d) < 1$ for some $d \in D$, which implies that $|c \cap d|<|c|$.
  Because edges are discrete, this can only happen when $|c \cap d|\le |c|-1$ for all clusters $c$. And the size of clusters is bounded by $|c| \le |V|$. Hence $\maxfrac{C}(d) \le (|V|-1)/|V| = 1 - 1/|V|$.
  And since for all other clusters $d'$, $\maxfrac{C}(d') \le 1$, we then have
  \begin{align*}
         & \sum_{d \in D} (\maxfrac{C}(d) - \beta)
    \\\le& (1 - \beta)|D| - 1/|V|
    \\<  & (1 - \beta)|C| - \epsilon
    \\\le& \sum_{d \in D} (w_d / v_d - \beta)
    \\\le& \sum_{d \in D} (\maxfrac{C}(d) - \beta),
  \end{align*}
  which is a contradiction.
  Hence, it must be the case that $\maxfrac{C}(d) = 1$ for all clusters $d \in D$.
  By the definition of $\maxfrac{C}$ this means that for every $d$ there is a cluster $c \in C$ such that $|c \cap d|=|c|$, and therefore $c \subseteq d$. Since the clusters are disjoint and $|D|=|C|$, this implies that $D = C$.
  Which is a contradiction, so $\sum_{d \in D} (w_d / v_d - \beta) < (1 - \beta)|C| - \epsilon$.
\end{proof}

When $M=0$, the adaptive scale modularity reduces to $w_d/(\gamma v_d) - |D|/\gamma^2$, and the above lemma is enough to prove richness. For non-zero values of $M$, we can get `close enough' by choosing large enough edge weights. This is formalized in the following lemma.

\begin{lemma}
  \label{lem:ext-modularity-bound}
  Let $d$ be a cluster in a clustering of a clique graph of $C$ with weight $k$.
  Then
  \begin{align*}
      \frac{w_d}{v_d} - \beta - \beta M / k
    \le
      q(d)/\beta
    \le
      \frac{w_d}{v_d} - \beta + 2 \beta^2 M / k
    ,
  \end{align*}
  where
  \[
   q(d) = \frac{w_d}{M + v_d/\beta} - \Bigl(\frac{v_d}{M + v_d/\beta}\Bigr)^2
  \] denotes the contribution of $d$ to the $M$-adaptive modularity.
\end{lemma}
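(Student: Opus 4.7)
The approach is direct algebraic manipulation: compute the difference $q(d)/\beta - (w_d/v_d - \beta)$, rewrite it in a form that exposes one negative and one positive term, and then bound each term separately using the clique-graph structure. The only structural fact we will need about the clique graph is the lower bound $v_d \ge k$ (since $d$ is non-empty, so it contains at least one node, which contributes at least $k \cdot 1 \cdot 1 = k$ to the volume via its own clique of $C$) together with the trivial $w_d \le v_d$.

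First I would set $r = M\beta + v_d$ so that $M + v_d/\beta = r/\beta$, and expand
\[
  \frac{q(d)}{\beta} - \frac{w_d}{v_d} + \beta
  = \frac{w_d}{r} - \frac{v_d^2}{\beta r^2/\beta^{-1}} - \frac{w_d}{v_d} + \beta,
\]
which after collecting terms simplifies to
\[
  \frac{q(d)}{\beta} - \frac{w_d}{v_d} + \beta
  \;=\; -\frac{w_d M\beta}{v_d\,r} \;+\; \frac{M\beta^2(M\beta + 2v_d)}{r^2}.
\]
This is the key identity; everything else is estimation.

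For the upper bound I drop the first (negative) term and bound the second. Using $M\beta + 2v_d \le 2(M\beta + v_d) = 2r$ and then $r \ge v_d \ge k$, I get
\[
  \frac{M\beta^2(M\beta + 2v_d)}{r^2} \;\le\; \frac{2M\beta^2}{r} \;\le\; \frac{2M\beta^2}{k},
\]
yielding $q(d)/\beta \le w_d/v_d - \beta + 2\beta^2 M/k$. For the lower bound I drop the positive term and bound the magnitude of the negative one, using $w_d \le v_d$ and $r \ge v_d \ge k$:
\[
  \frac{w_d M\beta}{v_d\, r} \;\le\; \frac{M\beta}{r} \;\le\; \frac{M\beta}{k},
\]
giving $q(d)/\beta \ge w_d/v_d - \beta - \beta M/k$. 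The only step requiring any care is the bookkeeping in deriving the identity (clearing the denominator $(M + v_d/\beta)^2 = r^2/\beta^2$ and simplifying $(M + v_d/\beta)^2 - v_d^2/\beta^2 = M^2 + 2M v_d/\beta$), and the only place where the clique-graph hypothesis is used is the bound $v_d \ge k$, which follows because any non-empty cluster $d$ contains some node whose own $C$-clique contributes at least $k$ to $v_d$.
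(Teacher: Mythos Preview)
Your proof is correct and follows essentially the same route as the paper's: both derive the identity
\[
  \frac{q(d)}{\beta} - \Bigl(\frac{w_d}{v_d} - \beta\Bigr)
  = -\frac{w_d M\beta}{v_d\,r} + \frac{M\beta^2(M\beta+2v_d)}{r^2},
  \qquad r = \beta M + v_d,
\]
and then bound the two terms separately using $w_d \le v_d$ and $r \ge v_d \ge k$. The only cosmetic differences are that the paper writes the identity as a single fraction and justifies $v_d \ge k$ via $w_d \ge k$ (from the self-loop $E(i,i)=k$); your first displayed line has a notational slip in the denominator $\beta r^2/\beta^{-1}$, but the final identity you state is correct.
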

\begin{proof}
  Since clusters are non-empty, and in a clique graph $\weight{i}{i} = k$,
  it follows that $v_d \ge w_d \ge k$.
  So
  \begin{align*}
      &q(d)/\beta
    \\=&
      \frac{\beta M w_d + v_d w_d - \beta v_d^2}{(\beta M + v_d)^2}
    \\=&
      \frac{w_d}{v_d} - \beta + \frac{\beta^2 M (\beta M + 2 v_d) - \beta^2 M^2 w_d/v_d - \beta M w_d}
                                     {(\beta M + v_d)^2}
%
    \\\le&
      \frac{w_d}{v_d} - \beta + \frac{\beta^2 M (\beta M + 2 v_d)}{(\beta M + v_d)^2}
    \\\le&
      \frac{w_d}{v_d} - \beta + \frac{2 \beta^2 M (\beta M + 2 v_d)}{(\beta M + v_d)(\beta M + 2 v_d)}
    \\=&
      \frac{w_d}{v_d} - \beta + \frac{2 \beta^2 M}{\beta M + v_d}
    \\\le&
      \frac{w_d}{v_d} - \beta + \frac{2 \beta^2 M}{k}.
  \end{align*}
  And
  since $w_d \le v_d$,
  \begin{align*}
      &q(d)/\beta
    \\=&
      \frac{w_d}{v_d} - \beta + \frac{\beta^2 M (\beta M + 2 v_d) - \beta^2 M^2 w_d/v_d - \beta M w_d}
                                     {(\beta M + v_d)^2}
    \\\ge& 
      \frac{w_d}{v_d} - \beta - \frac{\beta^2 M^2 + \beta M v_d}
                                     {(\beta M + v_d)^2}
    \\=&
      \frac{w_d}{v_d} - \beta - \frac{\beta M}{\beta M + v_d}
    \\\ge& 
      \frac{w_d}{v_d} - \beta - \frac{\beta M}{k}.
  \end{align*}
\end{proof}

Combining these lemmas yields the proof of the general theorem:
\begin{proof}
  Given a clustering $C$.
  Define $\beta = 1/\gamma$. If $\gamma > 1$ then $0 < \beta < 1$.
  Pick $k > 3 |V|\beta^2 M / \epsilon$
  where $\epsilon$ is defined as in Lemma~\ref{lem:ext-modularity-zero-rich}.
  
  Let $G$ be the clique graph of $C$ with weight $k$.
  Let $D \neq C$ be a clustering of $G$.
  Then by Lemmas~\ref{lem:ext-modularity-zero-rich} and~\ref{lem:ext-modularity-bound},
  \begin{align*}
     &\qext{M}{\gamma}(G,D)/\beta
     \\
        =& \sum_{d \in D} q(d)
     \\
       \le& \sum_{d \in D} (w_d/v_d - \beta + 2\beta^3M/k)
     \\
       \le& (1-\beta)|C| + 2|D|\beta^3M/k - \epsilon
     \\
       \le& (1-\beta)|C| + 2|V|\beta^2M/k - \epsilon
     \\
       <  & (1-\beta)|C| -  |V|\beta^2M/k
     \\
       \le& (1-\beta)|C| -  |C|\beta^2M/k
     \\
         =& \sum_{c \in C} (w_c/v_c - \beta + \beta^2M/k)
     \\
       \le& \qext{M}{\gamma}(C)/\beta.
  \end{align*}
  Hence the quality is maximal for $C$.
  Since there is a clique graph and $k$ for every clustering, adaptive scale modularity is rich.
\end{proof}

\section{Proof of Theorem~\ref{thm:ext-modularity-is-monotonic} (Adaptive Scale Modularity is Monotonic)}
\label{sec:proof-ext-monotonic}

\begin{proof}

  Given a constants $M > 0$ and $\vamount \ge 2$, a graph $G$ and a clustering $C$ of $G$.
  Let $c \in C$ be any cluster.
  Writing the volume of $c$ as $v_c = w_c + b_c$,
  the contribution of this cluster to the quality of $G$ is $\objpart(w_c,b_c)$ where
  \begin{align*}
    \objpart(w,b) &= \frac{w}{M + \vamount w + \vamount b} - \Bigl( \frac{w + b}{M + \vamount w + \vamount b} \Bigr)^2
    .
  \end{align*}
  The partial derivatives of $\objpart$ are
  \begin{align*}
    \frac{\partial \objpart(w,b)}{\partial w}
    &= \frac{M^2 + (\vamount-2)M(w + b) + \vamount b (M + \vamount w + \vamount b)}{(M + \vamount w + \vamount b)^3}
    \ge 0
    \\
    \frac{\partial \objpart(w,b)}{\partial b}
    &= - \frac{\vamount w M + (w + b) (M + \vamount^2 w)}{(M + \vamount w + \vamount b)^3}
    \le 0.
  \end{align*}
  This means that $\objpart$ is a monotonically non-decreasing function in $w$ and a non-increasing function in $b$.
  
  For any graph $G'$ that is a $C$-consistent change of $G$, it holds that $w'_c \ge w_c$ and $b'_c \le b_c$.
  So $\objpart(w'_c,b'_c) \ge \objpart(w_c,b_c)$.
  And therefore $\qext{M}{\gamma}(G',C) \ge \qext{M}{\gamma}(G,C)$.
  So adaptive scale modularity is monotonic.
\end{proof}


\bibliography{clustering}

\end{document}